\shorttitle{Exact Sampling of DPPs without Eigendecomposition} % insert short title here for use in running head
\newcommand{\C}{\mathbb{C}} % Nombres complexes
\newcommand{\R}{\mathbb{R}} % Nombres réels
\renewcommand{\P}{\mathbb{P}} % Probabilité
\newcommand{\Y}{\mathcal{Y}} % \mathcal{Y} = \{1,\dots,N\} set of DPP realizations
\newcommand{\DPP}{\operatorname{DPP}} %Y\sim \DPP(K)$
\newcommand{\tr}{\operatorname{tr}} % trace of a matrix
\newcommand{\Tr}{\operatorname{Tr}} % Trace of an operator
\begin{document}

\normalem
                               
\title{Exact Sampling of Determinantal Point Processes without Eigendecomposition} % insert title - use \\ if it requires more than one line.

\authorone[Universit\'e de Paris]{Claire Launay} % Affiliation is just the name of your university or institution

\addressone{Laboratoire MAP5\\
       Universit\'e de Paris, CNRS\\
       Paris, 75006, FRANCE} % Your postal address goes here.
       
\authortwo[Universit\'e d'Orl\'eans]{Bruno Galerne} % Affiliation is just the name of your university or institution

\addresstwo{Institut Denis Poisson,\\
       Universit\'e d'Orl\'eans, Universit\'e de Tours, CNRS\\
       Orl\'eans, 45100, FRANCE}       
       
\authorthree[CNRS and ENS Paris-Saclay]{Agn\`es Desolneux} % Affiliation is just the name of your university or institution

\addressthree{Centre Borelli, CNRS\\
       ENS Paris Saclay\\
       Gif-sur-Yvette, 91190, FRANCE}
       
\begin{abstract}
Determinantal point processes (DPPs) enable the modeling of repulsion: they provide diverse sets of points. The repulsion is encoded in a kernel $K$ that can be seen, in a discrete setting, as a matrix storing the similarity between points. The main exact algorithm to sample DPPs uses the spectral decomposition of $K$, a computation that becomes costly when dealing with a high number of points. Here, we present an alternative exact algorithm to sample in discrete spaces that avoids the eigenvalues and the eigenvectors computation. The method used here is innovative and numerical experiments show competitive results with respect to the initial algorithm.
\end{abstract}

\keywords{Determinantal point processes; Exact Sampling; Thinning; Cholesky decomposition; General marginal} % insert keywords separated by a semicolon

\ams{68U20}{60G55} % insert the primary Maths Subject Classification number in the first bracket
         % and the secondary ams number(s) in the second bracket
         % e.g. \ams{60E20}{49G03;49F10}
         %Here:  68U20: Computer science, Computing methodologies and applications, Simulation
         %       60G55: Probability theory and stochastic processes, Stochastic processes, Point processes		

%\date{Received: date / Accepted: date}
% The correct dates will be entered by the editor

Determinantal point processes (DPPs) are processes that capture negative correlations. The more similar two points are, the less likely they are to be sampled simultaneously. Then DPPs tend to create sets of diverse points. They naturally arise in random matrix theory \cite{Ginibre_1965} or in the modelling of a natural repulsive phenomenon like the repartition of trees in a forest \cite{Lavancier_Moller_Rubak_dpp_and_statistical_inference_2015}. Ever since the work of Kulesza and Taskar \cite{Kulesza_Taskar_dpp_for_machine_learning_2012}, these processes have become more and more popular in machine learning, because of their ability to draw subsamples that account for the inner diversity of data sets. This property is useful for many applications, such as summarizing documents \cite{Dupuy_Bach_learning_dpp_sublinear_time_arXiv2016},  improving a stochastic gradient descent by drawing diverse subsamples at each step \cite{Zhang_Minibatch_and_DPP_UAI2017} or extracting a meaningful subset of a large data set to estimate a cost function or some parameters \cite{Tremblayetal_DPPforCoresets_2018,Bardenet_StatApplicationsofDPPs_2017,Amblard_Subsampling_with_kDPP_2018}.
Several issues are under study, as learning DPPs, for instance through maximum likelihood estimation \cite{Kulesza_Taskar_learningDPP_2012,Brunel_RatesofEstimationforDPP_2017}, or sampling these processes. Here we will focus on the sampling question and we will only deal with a discrete and finite determinantal point process $Y$, defined by its kernel matrix $K$, a configuration particularly adapted to machine learning data sets. 

The main algorithm to sample DPPs is a spectral algorithm \cite{Hough_et_al_dpp_and_independence_2006}: it uses the eigendecomposition of $K$ to sample $Y$. It is exact and in general quite fast. Yet, the computation of the eigenvalues of $K$ may be very costly when dealing with large-scale data. That is why numerous algorithms have been conceived to bypass this issue.
Some authors tried to design a sampling algorithm adapted to specific DPPs. For instance, it is possible to speed up the initial algorithm by assuming that $K$ has a bounded rank \cite{Kulesza_Taskar_StructuredDPP_2010_NIPS,Gartrell_Low_rank_factorization_of_DPP_2016}. These authors use a dual representation of the kernel so that almost all the computations in the spectral algorithm are reduced. One can also deal with another class of DPPs associated to kernels $K$ that can be decomposed in a sum of tractable matrices \cite{Dupuy_Bach_learning_dpp_sublinear_time_arXiv2016}. In this case, the sampling is much faster and the authors study the inference on these classes of DPPs. At last, Propp and Wilson \cite{Propp_exact_sampling_spanning_tree_1998} use Markov chains and the theory of coupling from the past to sample exactly particular DPPs: uniform spanning trees. Adapting Wilson's algorithm, Avena and Gaudilli\`ere \cite{Avena_Randomspanningforests_2018} provide another algorithm to efficiently sample a parametrized DPP kernel associated to random spanning forests.

Another type of sampling algorithms is the class of approximate methods. Some authors approach the original DPP with a low rank matrix, either by random projections \cite{Kulesza_Taskar_dpp_for_machine_learning_2012,Gillenwater_Document_collection_2012} or using the Nystrom approximation \cite{AffandiKulesza_NystromApprox2013}.
The Monte Carlo Markov Chain methods offer also nice approximate sampling algorithms for DPPs. It is possible to obtain satisfying convergence guarantees for particular DPPs; for instance, k-DPPs with fixed cardinality \cite{AnariGR16_MCMCforStronglyRayleighDistributionsandDPP,Li_Jegelka_Sra_efficient_sampling_kdpp_AISTATS2016} or projection DPPs \cite{Gautier_Bardenet_Valko_zonotope_hit_and_run_ICML2017}. Li et al. \cite{Fast_mixing_mc_strongly_rayleign_measures_NIPS2016} even proposed a polynomial-time sampling algorithm for general DPPs, thus correcting the initial work of Kang \cite{Kang_fast_dpp_sampling_clustering_NIPS2013}. These algorithms are commonly used as they save significant time but the price to pay is the lack of precision of the result.

As one can see, except the initial spectral algorithm, no algorithm allows for the exact sampling of a general DPP.
The main contribution of this paper is to introduce such a general and exact algorithm, that does not involve the kernel eigendecomposition, to sample discrete DPPs.
The proposed algorithm is a sequential thinning procedure that relies on two new results: (i) the explicit formulation of the marginals of any determinantal point process and (ii) the derivation of an adapted Bernoulli point process containing a given DPP. This algorithm was first presented in \cite{LaunayGalerneDesolneux_exactDPPsampling_2018} and was, to our knowledge, the first exact sampling strategy without spectral decomposition. \textsc{Matlab} and Python implementations of this algorithm (using the PyTorch library in the Python code) are available online (\url{https://www.math-info.univ-paris5.fr/~claunay/exact_sampling.html}) and hopefully soon in the
repository created by Guillaume Gautier \cite{Gautier_DPPy_2018} gathering exact and approximate DPP sampling algorithms.
Let us mention that three very recent preprints \cite{Poulson_exactDPPsampling_2019,Gillenwater_TreebasedDPPsampling_2019,Derezinskietal_DPP_VFX_2019} also propose new algorithms to sample general DPPs without spectral decomposition. Poulson
\cite{Poulson_exactDPPsampling_2019} presents factorization strategies of Hermitian and non-Hermitian DPP kernels to sample general determinantal point processes.
As our algorithm, it heavily relies on Cholesky decomposition. Gillenwater and al. \cite{Gillenwater_TreebasedDPPsampling_2019} use the dual representation of $L$-ensembles presented in \cite{Kulesza_Taskar_dpp_for_machine_learning_2012} to construct a binary tree containing enough information on the kernel to sample DPPs in sublinear time. Derezi\'nski et al. \cite{Derezinskietal_DPP_VFX_2019} apply a preprocessing step that preselects a portion of the points using a regularized DPP. Then, a usual DPP sampling is done on the selection. 
This is related to our thinning procedure of the initial set by a Bernoulli point process. 
However note that the authors report that the overall complexity of their sampling scheme is sublinear while ours is cubic due to Cholesky decomposition. Finally, in \cite{Blasz_DetermThinning_2019}, Blaszczyszyn and Keeler present a similar procedure based on a continuous space: they use discrete determinantal point processes to thin a Poisson point process defined on that continuous space. The point process generated offers theoretical guarantees on repulsion and is applied to fit network patterns.

The rest of the paper is organized as follows: in the next section, we present the general framework of determinantal point processes and the classic spectral algorithm. In Section 2, we provide an explicit formulation of the general marginals and pointwise conditional probabilities of any determinantal point process, from its kernel $K$. Using these formulations, we first introduce a ``naive", exact but slow, sequential algorithm that relies on the Cholesky decomposition of the kernel $K$. In Section 3, using the thinning theory, we accelerate the previous algorithm and 
introduce a new exact sampling algorithm for DPPs that we call the sequential thinning algorithm. Its computational complexity is compared with that of the two previous algorithms. In Section 4, we display the results of some experiments comparing these three sampling algorithms and we describe the conditions under which the sequential thinning algorithm is more efficient than the spectral algorithm. Finally, we discuss and conclude on this algorithm.

\section{DPPs and their Usual Sampling Method: the Spectral Algorithm}

In the next sections, we will use the following notations. Let us consider a discrete finite set $\mathcal{Y} = \{1, \dots, N\}$. This set represents the space on which the point process is defined. In point process theory, it can be called the carrier space or state space. In this paper, we choose a machine learning term and refer to $\Y$ as the ground set. 
For $M \in \R^{N\times N}$ a matrix, we will denote by $M_{A \times B}$, $\forall A,B \subset \Y$,  the matrix $\left(M(i,j)\right)_{(i,j) \in A \times B}$ and the short notation $M_A = M_{A \times A}$.
Suppose that $K$ is a Hermitian positive semi-definite matrix of size $N \times N$, indexed by the elements of $\mathcal{Y}$, so that any of its eigenvalues is in $[0,1]$. A subset $Y \subset \mathcal{Y}$ is said to follow a DPP distribution of kernel $K$ if, 
$$ \mathbb{P}\left( A \subset Y \right) = \det(K_A), \; \forall \, A \subset \mathcal{Y}.$$

The spectral algorithm is standard for drawing a determinantal point process. It relies on the eigendecompostition of its kernel $K$. It was first introduced by Hough et al. \cite{Hough_et_al_dpp_and_independence_2006} and is also presented in a more detailed way by Scardicchio \cite{Scardicchio_Statistical_properties_of_DPP_2009}, Kulesza and Taskar \cite{Kulesza_Taskar_dpp_for_machine_learning_2012} or Lavancier et al. \cite{Lavancier_Moller_Rubak_dpp_and_statistical_inference_2015}. It proceeds in 3 steps: the first step is the computation of the eigenvalues $\lambda_j$ and the eigenvectors $v^j$ of the matrix $K$. The second step consists in randomly selecting a set of eigenvectors according to $N$ Bernoulli variables of parameter $\lambda_i$, for $i=1,\dots,N$. The third step is drawing sequentially the associated points using a Gram-Schmidt process.

\begin{algorithm}
\begin{enumerate}[itemsep=0mm]
\item Compute the orthonormal eigendecomposition $(\lambda_j, v^j)$ of the matrix $K$.
\item Select a random set of eigenvectors: Draw a Bernoulli process $\mathbf{X} \in \{0,1\}^N$ with parameter $(\lambda_j)_{j}$.
Denote by $n$ the number Bernoulli samples equal to one, $\displaystyle{\{\mathbf{X}=1\} = \{j_1,\dots, j_n\}}$. Define the matrix
$ V = \left( v^{j_1} \, v^{j_2} \, \cdots \, v^{j_n} \right) \in \R^{N\times n} $
and denote by $V_k \in\R^n$ the $k$-th line of $V$, for $k\in\Y$.
\item Return the sequence $Y = \{y_1,y_2,\dots,y_n\}$ sequentially drawn as follows: \\
For $l=1$ to $n$
\begin{itemize}
\item Sample a point $y_l\in\Y$ from the discrete distribution, 
$$\hspace{-1cm}
p^l_k = \frac{1}{n-l+1}\left( \|V_k\|^2 - \sum_{m=1}^{l-1} |  \langle V_{k}, e_m\rangle |^2 \right),\forall k \in \Y.
$$
\item If $l<n$, define $e_l = \frac{w_l}{\|w_l\|} \in\R^n$ where 
$w_l = V_{y_{l}} - \sum_{m=1}^{l-1} \langle V_{y_l}, e_m\rangle e_m.$
\end{itemize}
\end{enumerate}

\caption{The spectral sampling algorithm}
\label{alg_spectral_simulation_dpp}
\end{algorithm}

This algorithm is exact and relatively fast but it becomes slow when the size of the ground set grows. For a ground set of size $N$ and a sample of size $n$, the third step costs $O(Nn^3)$ because of the Gram-Schmidt orthonormalisation. Tremblay et al. \cite{Tremblay_AlgoforDPP_2018} propose to speed it up using optimized computations and they achieve the complexity $O(Nn^2)$ for this third step. Nevertheless, the eigendecomposition of the matrix $K$ is the heaviest part of the algorithm, as it runs in time $O(N^3)$, and we will see in the numerical results that this first step represents in general more than $90\%$ of the running time of the spectral algorithm. As nowadays the amount of data explodes, in practice the matrix $K$ is very large so it seems relevant to try to avoid this costly operation. We compare the time complexities of the different algorithms presented in this paper at the end of Section \ref{sec:thinning}. In the next section, we show that any DPP can be exactly sampled by a sequential algorithm that does not require the eigendecomposition of $K$.

\section{Sequential Sampling Algorithm}

Our goal is to build a competitive algorithm to sample DPPs that does not involve the eigendecomposition of the matrix $K$. To do so, we first develop a ``naive" sequential sampling algorithm and subsequently, we will accelerate it using a thinning procedure, presented in Section \ref{sec:thinning}.

\subsection{Explicit General Marginal of a DPP}

First, we need to specify the marginals and the conditional probabilities of any DPP. When $I-K$ is invertible, a formulation of the explicit marginals already exists \cite{Kulesza_Taskar_dpp_for_machine_learning_2012}, it implies to deal with a $L$-ensemble matrix $L$ instead of the matrix $K$. However, this hypothesis is reductive: among others, it ignores the useful case of projection DPPs, when the eigenvalues of $K$ are either 0 or 1. We show below that general marginals can easily be formulated from the  associated kernel matrix $K$. For all $A \subset \Y$, we denote $I^A$ the $N \times N$ matrix with $1$ on its diagonal coefficients indexed by the elements of $A$, and $0$ anywhere else. We also denote $|A|$ the cardinality of any subset $A \subset \Y$ and $\overline{A} \in \mathcal{Y}$ the complementary set of $A$ in $\mathcal{Y}$.

\begin{proposition}[Distribution of a DPP]
For any $A\subset\Y$, we have
$$
\P(Y = A) = (-1)^{|A|} \det(I^{\overline{A}} - K).
$$
\end{proposition}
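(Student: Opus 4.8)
The plan is to combine the defining marginal property $\P(A \subset Y) = \det(K_A)$ with an inclusion--exclusion identity and a standard principal-minor expansion of the determinant. A pleasant feature of this route is that it never requires $I - K$ to be invertible, which is exactly the point of the proposition (it must also cover projection DPPs).

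First I would write the atomic event as $\{Y = A\} = \{A \subset Y\} \cap \{\overline{A} \cap Y = \emptyset\}$ and expand the corresponding indicator multiplicatively,
$$
\Ind_{\{Y=A\}} = \prod_{i \in A}\Ind_{\{i \in Y\}}\prod_{j \in \overline{A}}\bigl(1 - \Ind_{\{j \in Y\}}\bigr)
= \sum_{C \subset \overline{A}} (-1)^{|C|}\,\Ind_{\{A \cup C \subset Y\}},
$$
where the last equality expands the product over $\overline{A}$ and uses $\prod_{j \in C}\Ind_{\{j \in Y\}} = \Ind_{\{C \subset Y\}}$. Taking expectations and invoking $\P(A \cup C \subset Y) = \det(K_{A \cup C})$ gives, after the change of summation variable $B = A \cup C$,
$$
\P(Y = A) = \sum_{C \subset \overline{A}} (-1)^{|C|}\det(K_{A\cup C}) = \sum_{B\, :\, A \subset B \subset \Y} (-1)^{|B| - |A|}\det(K_B).
$$

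It then remains to identify this alternating sum of principal minors with $(-1)^{|A|}\det(I^{\overline{A}} - K)$. For this I would use the elementary expansion, valid for any diagonal matrix $D = \operatorname{diag}(d_1,\dots,d_N)$ and any $N\times N$ matrix $M$,
$$
\det(D + M) = \sum_{B \subset \Y} \Bigl(\prod_{i \in \overline{B}} d_i\Bigr)\det(M_B),
$$
which follows by expanding $\det(D+M)$ multilinearly in its $N$ columns (the $i$-th column is the sum of $d_i e_i$ and the $i$-th column of $M$), the term indexed by $B$ collecting exactly the sets of columns where the $M$-part is chosen; equivalently one may invoke the known identity $\det(I + M) = \sum_B \det(M_B)$. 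Applying this with $D = I^{\overline{A}}$ (so $d_i = 1$ for $i \in \overline{A}$ and $d_i = 0$ for $i \in A$) and $M = -K$, the factor $\prod_{i \in \overline{B}} d_i$ vanishes unless $\overline{B} \subset \overline{A}$, i.e. unless $A \subset B$, in which case it equals $1$, and $\det((-K)_B) = (-1)^{|B|}\det(K_B)$. Hence
$$
\det(I^{\overline{A}} - K) = \sum_{B\, :\, A \subset B \subset \Y} (-1)^{|B|}\det(K_B),
$$
and multiplying by $(-1)^{|A|}$ reproduces the expression for $\P(Y=A)$ obtained above.

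The argument is essentially routine; the only points requiring a little care are the bookkeeping of the signs $(-1)^{|C|}$ versus $(-1)^{|B|-|A|}$ under the substitution $B = A\cup C$, and a clean justification of the multilinear column expansion of the determinant. I do not expect a genuine obstacle here, and in particular no spectral information on $K$ enters, so no invertibility hypothesis is needed.
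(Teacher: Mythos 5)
Your proof is correct and follows essentially the same route as the paper: both express $\P(Y=A)$ as the alternating sum $\sum_{B\supset A}(-1)^{|B\setminus A|}\det(K_B)$ (you via a direct indicator expansion, the paper via the M\"obius inversion formula) and then collapse it using the identity $\sum_{A\subset B\subset\Y}\det(L_B)=\det(I^{\overline{A}}+L)$ applied to $L=-K$ (which you prove by multilinear column expansion and the paper cites from Kulesza and Taskar, Theorem~2.1). The only difference is that your version is self-contained where the paper invokes two known lemmas; the mathematical content and sign bookkeeping are identical and correct.
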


\begin{proof}
We have that
$\displaystyle \P(A\subset Y) = \sum_{B\supset A} \P(Y = B).$
Using the M\"{o}bius inversion formula (see Appendix \ref{app:mobius}), for all $A\subset\Y, $
$$
\begin{aligned} \P(Y = A) &= \sum_{B\supset A} (-1)^{|B\setminus A|} \P(B \subset Y) = (-1)^{|A|} \sum_{B\supset A} (-1)^{|B|} \det(K_B)
\\
& = (-1)^{|A|} \sum_{B\supset A} \det((-K)_B)
\end{aligned}
$$
Furthermore, Kulesza and Taskar \cite{Kulesza_Taskar_dpp_for_machine_learning_2012} state in Theorem 2.1 that for all $\displaystyle L \in \R^{N \times N},$ for all $ \displaystyle
 A \subset~\mathcal{Y}, \sum_{A \subset B \subset \mathcal{Y}}
 \det(L_B) = \det(I^{\overline{A}}+L)$. Then we obtain
$$ \P(Y = A) = (-1)^{|A|} \det(I^{\overline{A}} - K).$$

\end{proof}

We have by definition $\P(A \subset Y) = \det(K_A)$ for all $A$, and as a consequence $\P(B~\cap~Y = \emptyset) = \det((I - K)_B)$ for all $B$. The next proposition gives for any DPP the expression of the general marginal $\P(A \subset Y, B \cap Y = \emptyset)$, for any $A,B$ disjoint subsets of $\mathcal{Y}$, using $K$. In what follows, $H^B$ denotes the symmetric positive semi-definite matrix 
$$H^B = K + K_{\Y \times B} ((I-K)_B)^{-1} K_{B\times \Y}.$$

\begin{theorem}
[General Marginal of a DPP]
Let $A, B\subset\Y$ be disjoint.
If $\P(B\cap Y = \emptyset) = \det((I-K)_B) = 0$, then
$\displaystyle{\P(A\subset Y, B\cap Y = \emptyset)=0}$.
Otherwise, the matrix $(I-K)_B$ is invertible and
$$
\P(A\subset Y, B\cap Y = \emptyset) = \det((I-K)_B)\det(H^B_A).
$$
\end{theorem}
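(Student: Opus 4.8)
The plan is to turn the joint marginal into a single block determinant and then split that determinant off with a Schur complement. First I would dispose of the degenerate case: if $\det((I-K)_B)=0$, then since the event $\{A\subset Y,\ B\cap Y=\emptyset\}$ is contained in $\{B\cap Y=\emptyset\}$ and $\P(B\cap Y=\emptyset)=\det((I-K)_B)$ (recalled just before the theorem), monotonicity of $\P$ forces $\P(A\subset Y,\ B\cap Y=\emptyset)=0$. From now on assume $\det((I-K)_B)\neq 0$. Because $(I-K)_B$ is a principal submatrix of the positive semi-definite matrix $I-K$, a nonzero determinant upgrades to positive definiteness, hence invertibility, of $(I-K)_B$; this is exactly what is needed for $H^B$ to be well defined and for the Schur step below.

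Next I would expand the avoidance event by inclusion–exclusion over $B$. Since $A\cap B=\emptyset$,
$$
\Ind\{A\subset Y,\ B\cap Y=\emptyset\}=\Ind\{A\subset Y\}\prod_{b\in B}\bigl(1-\Ind\{b\in Y\}\bigr)=\sum_{S\subset B}(-1)^{|S|}\Ind\{A\cup S\subset Y\},
$$
so taking expectations and using $\P(A\cup S\subset Y)=\det(K_{A\cup S})$ gives $\P(A\subset Y,\ B\cap Y=\emptyset)=\sum_{S\subset B}(-1)^{|S|}\det(K_{A\cup S})$. Then I would re-index by $C=A\cup S$, which runs bijectively over $\{C:\ A\subset C\subset A\cup B\}$ with $|S|=|C|-|A|$, and use $(-1)^{|C|}\det(K_C)=\det((-K)_C)$ to rewrite the sum as $(-1)^{|A|}\sum_{A\subset C\subset A\cup B}\det((-K)_C)$.

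Now I would invoke the summation identity of \cite{Kulesza_Taskar_dpp_for_machine_learning_2012} (their Theorem 2.1, already used above), applied on the ground set $A\cup B$, whose complement of $A$ is $B$: it turns the last sum into $(-1)^{|A|}\det M$, where, with rows and columns split along $A$ and $B$,
$$
M=\begin{pmatrix} -K_A & -K_{A\times B}\\ -K_{B\times A} & (I-K)_B\end{pmatrix}.
$$
Since $(I-K)_B$ is invertible, the Schur complement determinant formula gives $\det M=\det((I-K)_B)\det\bigl(-K_A-K_{A\times B}((I-K)_B)^{-1}K_{B\times A}\bigr)$. By the definition $H^B=K+K_{\Y\times B}((I-K)_B)^{-1}K_{B\times\Y}$, the inner matrix is exactly $-H^B_A$, with determinant $(-1)^{|A|}\det(H^B_A)$. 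Hence $\P(A\subset Y,\ B\cap Y=\emptyset)=(-1)^{|A|}\det M=\det((I-K)_B)\det(H^B_A)$, the two signs cancelling.

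The computation is mostly bookkeeping; the steps most prone to error are the sign tracking in the re-indexing $S\leftrightarrow C$ and the correct instantiation of the Kulesza–Taskar identity on the reduced ground set $A\cup B$ rather than on all of $\Y$. The one genuine point — and what makes both the degenerate case and the Schur step legitimate — is that $\det((I-K)_B)\neq 0$ together with positive semi-definiteness of $I-K$ yields actual invertibility of $(I-K)_B$. (The symmetry and positive semi-definiteness of $H^B$ itself, asserted before the theorem, follow separately, as $K_{\Y\times B}((I-K)_B)^{-1}K_{B\times\Y}$ has the form $C^{*}PC$ with $P=((I-K)_B)^{-1}$ positive definite.)
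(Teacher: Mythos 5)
Your proof is correct, but it takes a genuinely different route from the paper's. The paper starts from the exact distribution $\P(Y=C)=(-1)^{|C|}\det(I^{\overline{C}}-K)$ (its preceding proposition), sums over all $C$ with $A\subset C\subset\overline{B}$, applies the Schur determinant formula \emph{inside} each term of that sum to factor out $\det((I-K)_B)$ and make $H^B$ appear, and then needs the Kulesza--Taskar summation identity together with a M\"obius inversion to collapse the remaining alternating sum to $(-1)^{|A|}\det(H^B_A)$. You instead bypass the distribution formula entirely: you expand $\Ind\{B\cap Y=\emptyset\}$ as $\prod_{b\in B}(1-\Ind\{b\in Y\})$, which reduces everything to the defining inclusion probabilities $\det(K_{A\cup S})$ summed over $S\subset B$ (a sum over $2^{|B|}$ terms rather than $2^{|\overline{B}|}$), apply the Kulesza--Taskar identity once on the reduced ground set $A\cup B$, and finish with a single Schur complement; your sign bookkeeping in the re-indexing $S\leftrightarrow C$ and in $\det(-H^B_A)=(-1)^{|A|}\det(H^B_A)$ checks out. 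Your argument is more self-contained and arguably shorter (no M\"obius inversion, no dependence on the previous proposition, one Schur step instead of one per term), while the paper's version reuses machinery it has already built and makes visible the structural fact that, conditionally on $B\cap Y=\emptyset$, the process behaves like a DPP on $\overline{B}$ governed by $H^B$. You also treat the degenerate case $\det((I-K)_B)=0$ explicitly by monotonicity, and correctly note that a nonzero determinant of a principal submatrix of the positive semi-definite matrix $I-K$ gives genuine invertibility, points the paper leaves implicit.
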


\begin{proof}
Let $A, B\subset\Y$ disjoint such that $\P(B\cap Y = \emptyset)  \neq 0$.
Using the previous proposition,
$$
\P(A\subset Y, B\cap Y = \emptyset)= \sum_{A\subset C \subset \overline{B}} \P(Y = C)
= \sum_{A\subset C \subset \overline{B}} (-1)^{|C|} \det(I^{\overline{C}} - K).
$$
For any $C$ such that $A\subset C \subset \overline{B}$, one has $B\subset \overline{C}$. Hence, by reordering the matrix coefficients, and using the Schur's determinant formula \cite{Horn_MatrixAnalysis_1990},
$$
\begin{aligned}
\det(I^{\overline{C}} - K)
&= \det
\begin{pmatrix}
(I^{\overline{C}} - K)_B  &  (I^{\overline{C}} - K)_{B\times \overline{B}} \\
(I^{\overline{C}} - K)_{\overline{B}\times B} & (I^{\overline{C}} - K)_{\overline{B}}
\end{pmatrix}\\
&= \det
\begin{pmatrix}
(I - K)_B  &  -K_{B\times \overline{B}} \\
-K_{\overline{B}\times B} & (I^{\overline{C}} - K)_{\overline{B}}
\end{pmatrix}\\
%&\hspace{-2cm}= \det
%\begin{pmatrix}
%M_1 & M_2 \\
%M_3 & M_4
%\end{pmatrix}\\
%&\hspace{-2cm}= \det(M_1) \det(M_4 - M_3 M_1^{-1} M_2).\\
%&\hspace{-2cm}= \det((I - K)_B)\det( (I^{\overline{C}} - K)_{\overline{B}} - K_{\overline{B}\times B} ((I-K)_B)^{-1} K_{B\times \overline{B}})\\
%&\hspace{-2cm}= \det((I - K)_B) \det\left( \left( I^{\overline{C}} - K - K_{\Y \times B} ((I-K)_B)^{-1} K_{B\times \Y}\right)_{\overline{B}}\right)\\ 
&= \det((I - K)_B) \det(( I^{\overline{C}} - H^B)_{\overline{B}}).
\end{aligned}
$$
Thus, $\displaystyle \P(A\subset Y, B\cap Y = \emptyset)= \det((I - K)_B) \sum_{A\subset C \subset \overline{B}} (-1)^{|C|} \det( ( I^{\overline{C}} -H^B)_{\overline{B}}).$\\
According to Theorem 2.1 in Kulesza and Taskar~\cite{Kulesza_Taskar_dpp_for_machine_learning_2012}, for all $A\subset\overline{B}$,
$$
\quad \sum_{A\subset C \subset \overline{B}} \det( - H^B_C) = \det( ( I^{\overline{A}} -H^B)_{\overline{B}}).
$$
Then, M\"{o}bius inversion formula ensures that, $\forall A\subset\overline{B},$
$$
\hspace{-0.1cm}\sum_{A\subset C \subset \overline{B}} (-1)^{|C\setminus A|} \det( ( I^{\overline{C}} -H^B)_{\overline{B}})
= \det( - H^B_A)
= (-1)^{|A|} \det(H^B_A).
$$
Hence, 
$\P(A\subset Y, B\cap Y = \emptyset) 
= \det((I - K)_B) \det(H^B_A)$.
\end{proof}

With this formula, we can explicitly formulate the pointwise conditional probabilities of any DPP.

\begin{corollary}[Pointwise conditional probabilities of a DPP]
\label{prop_pointwise_conditional_proba_dpp}
Let $A, B\subset\Y$ be two disjoint sets such that $\displaystyle{\P(A\subset Y,~B\cap Y = \emptyset)\neq 0}$, and let $k\notin A\cup B$.
Then,
\begin{equation}
\begin{aligned}
\P(\{k\}\subset Y | A \subset Y,~B\cap Y = \emptyset)
&= \frac{\det(H^B_{A\cup\{k\}})}{\det(H^B_A)}\\
&= H^B(k,k) - H^B_{\{k\}\times A} (H^B_A)^{-1} H^B_{A\times \{k\}}.
\label{eq_pointwise_conditional_proba_dpp}
\end{aligned}
\end{equation}
\end{corollary}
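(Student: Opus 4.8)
The plan is to reduce the statement to a ratio of two general marginals and then invoke the Theorem on general marginals together with Schur's determinant identity.

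First I would expand the conditional probability as
$$
\P(\{k\}\subset Y \mid A \subset Y,\ B\cap Y = \emptyset) = \frac{\P\big((A\cup\{k\})\subset Y,\ B\cap Y = \emptyset\big)}{\P(A\subset Y,\ B\cap Y = \emptyset)},
$$
which is legitimate because the denominator is nonzero by hypothesis. Since $k\notin A\cup B$, the set $A\cup\{k\}$ is disjoint from $B$, so both numerator and denominator are general marginals covered by the Theorem. Moreover $\{A\subset Y,\ B\cap Y=\emptyset\}\subseteq\{B\cap Y=\emptyset\}$, so the hypothesis forces $\det((I-K)_B)=\P(B\cap Y=\emptyset)>0$; in particular $(I-K)_B$ is invertible and $H^B$ is well defined, so the Theorem genuinely applies to both events.

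Next I would substitute the Theorem's formula into numerator and denominator, obtaining $\det((I-K)_B)\det(H^B_{A\cup\{k\}})$ and $\det((I-K)_B)\det(H^B_A)$ respectively. The common nonzero factor $\det((I-K)_B)$ cancels, which gives the first equality; note that $\det(H^B_A)\neq 0$ as well, since otherwise the denominator marginal would vanish, contradicting the hypothesis. For the second equality I would reorder indices so that $k$ is listed last in $A\cup\{k\}$ and write the principal submatrix in block form,
$$
H^B_{A\cup\{k\}} = \begin{pmatrix} H^B_A & H^B_{A\times\{k\}} \\ H^B_{\{k\}\times A} & H^B(k,k)\end{pmatrix},
$$
then apply Schur's determinant formula with respect to the invertible block $H^B_A$ to get $\det(H^B_{A\cup\{k\}}) = \det(H^B_A)\big(H^B(k,k) - H^B_{\{k\}\times A}(H^B_A)^{-1}H^B_{A\times\{k\}}\big)$, and divide by $\det(H^B_A)$.

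There is no real obstacle here: the argument is essentially a two-line computation once the Theorem is available. The only point requiring a little care is the bookkeeping that guarantees the determinants appearing in denominators ($\det((I-K)_B)$ and $\det(H^B_A)$) are nonzero, and both facts follow immediately from the standing hypothesis $\P(A\subset Y,\ B\cap Y=\emptyset)\neq 0$ combined with the Theorem.
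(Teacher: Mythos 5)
Your proposal is correct and follows exactly the route the paper intends: the paper's own justification is the one-line remark that the corollary is ``a straightforward application of the previous expression and the Schur determinant formula,'' which is precisely the ratio-of-marginals computation you carried out. Your added bookkeeping (that $\det((I-K)_B)$ and $\det(H^B_A)$ are nonzero under the standing hypothesis) is a welcome, if minor, elaboration of what the paper leaves implicit.
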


This is a straightforward application of the previous expression and the Schur determinant formula \cite{Horn_MatrixAnalysis_1990}. Note that these pointwise conditional probabilities are related to the Palm distribution of a point process \cite{Chiu_stochasticGeom_2013} which characterizes the distribution of the point process under the condition that there is a point at some location $x \in \Y$. Shirai and Takahashi proved in \cite{Shirai2003I} that DPPs on general spaces are closed under Palm distributions, in the sense that there exists a DPP kernel $K^x$ such that the Palm measure associated to DPP($K$) and $x$ is a DPP defined on $\Y$ with kernel $K^x$. Borodin and Rains \cite{Borodin_Schurprocess_2005} also provide similar results on discrete spaces, using $L$-ensembles, that Kulesza and Taskar adapt in \cite{Kulesza_Taskar_dpp_for_machine_learning_2012}. They condition the DPP not only on a subset included in the point process but also, similarly as Corollary \ref{eq_pointwise_conditional_proba_dpp}, on a subset not included in the point process. As Shirai and Takahashi, they derive a formulation of the generated marginal kernel $L$.\\
\indent Now, we have all the necessary expressions for the sequential sampling of a DPP.

\subsection{Sequential Sampling Algorithm of a DPP}

This sequential sampling algorithm simply consists in using Formula \eqref{eq_pointwise_conditional_proba_dpp} and updating at each step the pointwise conditional probability, knowing the previous selected points.  It is presented in Algorithm \ref{algo_sequential_sampling_dpp}. We recall that this sequential algorithm is the first step toward developing a competitive sampling algorithm for DPPs: with this method, one doesn't need eigendecomposition anymore. The second step (Section \ref{sec:thinning}) will be to reduce its computational cost.

\begin{algorithm}[H]
\begin{itemize}
\item Initialization: $A \leftarrow \emptyset$, $B \leftarrow \emptyset$.
\item For $k=1$ to $N$:
\begin{enumerate}[leftmargin=-0.1cm]
\item Compute $H^B_{A\cup\{k\}} =\displaystyle{ K_{A\cup\{k\}} + K_{A\cup\{k\} \times B} ((I-K)_B)^{-1} K_{B\times A\cup\{k\}}}$.
\item Compute the probability $p_k$ given by
$$
p_k 
= \P\left(\{k\}\subset Y | A \subset Y,~B\cap Y = \emptyset\right)
= H^B(k,k) - H^B_{\{k\}\times A} (H^B_A)^{-1} H^B_{A\times \{k\}}.
$$
\item With probability $p_k$, $k$ is included, $A\leftarrow A\cup\{k\}$, otherwise $\displaystyle{B\leftarrow B\cup\{k\}}$.
\end{enumerate}
\item Return $A$.
\end{itemize}
\caption{Sequential sampling of a DPP with kernel $K$}
\label{algo_sequential_sampling_dpp}
\end{algorithm}

The main operations of Algorithm~\ref{algo_sequential_sampling_dpp} involve solving linear systems related to $(I-K)_B^{-1}$. Fortunately, here we can use the Cholesky factorization, which alleviates the computational cost.
%Step 1. involves to solve $|A|+1$ linear systems of size $B$ and step 2. involves to solve one linear system of size $|A|$.
%Since $|B|$ is generally larger than $|A|$, this is step 1. that is more costly.
Suppose that $T^B$ is the Cholesky factorization of $(I-K)_B$, that is,
$T^B$ is a lower triangular matrix such that $(I-K)_B = T^B (T^{B})^{*}$ (where $(T^{B})^{*}$ is the conjugate transpose of $T^{B}$).
Then, denoting $J^B = (T^{B})^{-1} K_{B\times A\cup\{k\}}$, one simply has 
$
H^B_{A\cup\{k\}} = \displaystyle K_{A\cup\{k\}} + (J^{B})^{*} J^B.
$

Furthermore, at each iteration where $B$ grows, the Cholesky decomposition $T^{B\cup\{k\}}$ of $(I-K)_{B\cup\{k\}}$ can be computed from $T^B$ using standard Cholesky update operations, involving the resolution of only one linear system of size $|B|$. See Appendix \ref{sec:cholesky_update} for the details of a typical Cholesky decomposition update.

In comparison with the spectral sampling algorithm of Hough et al.~\cite{Hough_et_al_dpp_and_independence_2006}, one requires computations for each site of $\Y$, and not just one for each sampled point of $Y$. We will see at the end of Section \ref{sec:thinning} and in the experiments that it is not competitive.

\section{Sequential Thinning Algorithm} \label{sec:thinning}

In this section, we show that we can significantly decrease the number of steps and the running time of Algorithm \ref{algo_sequential_sampling_dpp}: we propose to first sample a point process $X$ containing $Y$, the desired DPP, and then make a sequential selection of the points of $X$ to obtain $Y$.
This procedure can be called a sequential thinning.

\subsection{General Framework of Sequential Thinning}

We first describe a general sufficient condition for which a target point process $Y$ - it will be a determinantal point process in our case -  can be obtained as a sequential thinning of a point process $X$. This is a discrete adaptation of the thinning procedure on the continuous line of Rolski and Szekli~\cite{Rolski_Szekli_stochastic_ordering_and_thinning_of_point_processes_1991}.
To do this, we will consider a coupling $(X,Z)$ such that $Z\subset X$ will be a random selection of the points of $X$ and that will have the same distribution as $Y$. From this point onward, we identify the set $X$ with the vector of size $N$ with $1$ in the place of the elements of $X$ and $0$ elsewhere, and we use the notations $X_{1:k}$ to denote the vector $(X_1,\dots,X_k)$ and $0_{1:k}$ to denote the null vector of size $k$. We want to define the random vector $(X_1,Z_1,X_2,Z_2,\dots,X_N,Z_N)\in\R^{2N}$ with the following conditional distributions for $X_k$ and $Z_k$:
\begin{equation}
\begin{cases}
\P(X_k = 1|Z_{1:k-1} = z_{1:k-1}, X_{1:k-1} = x_{1:k-1})  = \P(X_k = 1|X_{1:k-1} = x_{1:k-1}) \vspace{0.5cm}\\

\P(Z_k = 1|Z_{1:k-1} = z_{1:k-1}, X_{1:k} = x_{1:k}) = \mathbf{1}_{\{x_k=1\}} \dfrac{\P(Y_k = 1|Y_{1:k-1} = z_{1:k-1})}{\P(X_k = 1| X_{1:k-1}=x_{1:k-1})}.
\end{cases}
\label{eq:definitions_sequential_thinning}
\end{equation}

\begin{proposition}[Sequential thinning]
\label{prop_sequential_thinning}
Assume that $X,Y, Z$ are discrete point processes on $\Y$ that satisfy 
for all $k\in\{1,\dots,N\}$, and all $z$, $x\in\{0,1\}^N$,
\begin{equation}
\begin{array}{c}
\P(Z_{1:k-1}=z_{1:k-1},X_{1:k-1}=x_{1:k-1}) > 0\\
\text{implies}\\
\P(Y_{k} = 1|Y_{1:k-1} = z_{1:k-1}) \leq \P(X_{k} = 1| X_{1:k-1}=x_{1:k-1}).
\end{array}
\label{eq:necessary_and_sufficient_condition_for_thinning}
\end{equation}
Then, it is possible to choose $(X,Z)$ in such a way that \eqref{eq:definitions_sequential_thinning} is satisfied. In that case, we have that $Z$ is a thinning of $X$, that is $Z\subset X$, and $Z$ has the same distribution as $Y$.
\end{proposition}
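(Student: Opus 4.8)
The plan is to construct the coupled vector $(X_1,Z_1,\dots,X_N,Z_N)$ explicitly by recursion on $k$, using the conditional distributions prescribed in \eqref{eq:definitions_sequential_thinning}, and then to verify by induction that the resulting $X$ has the law announced for it, that $Z\subset X$, and that $Z$ has the same law as $Y$. First I would check that the formulas in \eqref{eq:definitions_sequential_thinning} actually define a valid probability kernel: the right-hand side of the first line is a number in $[0,1]$ by hypothesis on $X$, and the right-hand side of the second line lies in $[0,1]$ precisely because of the inequality \eqref{eq:necessary_and_sufficient_condition_for_thinning} (the ratio $\P(Y_k=1|Y_{1:k-1}=z_{1:k-1})/\P(X_k=1|X_{1:k-1}=x_{1:k-1})$ is well defined and $\le 1$ exactly on the event $\{\P(Z_{1:k-1}=z_{1:k-1},X_{1:k-1}=x_{1:k-1})>0\}$ that the recursion will keep us on, and when $x_k=0$ the indicator kills it). So the joint law is well defined by Ionescu--Tulcea / the chain rule for finite sequences.

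The containment $Z\subset X$ is immediate from the indicator $\mathbbm{1}_{\{x_k=1\}}$ in the second line: whenever $X_k=0$ we get $\P(Z_k=1|\cdots)=0$, so $Z_k\le X_k$ pointwise. The heart of the argument is the distributional claim, which I would prove by a joint induction on $k$ establishing two statements simultaneously: (a) $\P(X_{1:k}=x_{1:k})$ equals the target marginal law of the process $X$ (this is basically automatic, since the first line of \eqref{eq:definitions_sequential_thinning} says the conditional law of $X_k$ given the past of $(X,Z)$ depends only on the past of $X$ and equals the prescribed one, so marginalizing out the $Z$-coordinates reproduces the chain-rule factorization of $X$'s law); and (b) $\P(Z_{1:k}=z_{1:k})=\P(Y_{1:k}=z_{1:k})$ for every $z_{1:k}$. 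For (b) the key computation is
$$
\P(Z_k=1\mid Z_{1:k-1}=z_{1:k-1})
=\sum_{x_{1:k}}\P(Z_k=1\mid Z_{1:k-1}=z_{1:k-1},X_{1:k}=x_{1:k})\,\P(X_{1:k}=x_{1:k}\mid Z_{1:k-1}=z_{1:k-1}),
$$
where only terms with $x_k=1$ survive; substituting the second line of \eqref{eq:definitions_sequential_thinning} and using the first line to write $\P(X_k=1\mid X_{1:k-1}=x_{1:k-1})$ as the conditional probability appearing in the denominator, that denominator cancels against $\P(X_k=1\mid Z_{1:k-1}=z_{1:k-1},X_{1:k-1}=x_{1:k-1})$, leaving $\P(Y_k=1\mid Y_{1:k-1}=z_{1:k-1})$ times $\sum_{x_{1:k-1}}\P(X_{1:k-1}=x_{1:k-1}\mid Z_{1:k-1}=z_{1:k-1})=1$. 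Combined with the inductive hypothesis $\P(Z_{1:k-1}=\cdot)=\P(Y_{1:k-1}=\cdot)$, this gives $\P(Z_{1:k}=z_{1:k})=\P(Y_{1:k}=z_{1:k})$, closing the induction; taking $k=N$ yields $Z\eqindist Y$.

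The main obstacle is bookkeeping the conditioning events so that every ratio and every conditional probability I write down is actually well defined — in particular making sure the recursion never assigns positive mass to a history $(z_{1:k-1},x_{1:k-1})$ on which $\P(X_k=1\mid X_{1:k-1}=x_{1:k-1})=0$ while $\P(Y_k=1\mid Y_{1:k-1}=z_{1:k-1})>0$; this is exactly what hypothesis \eqref{eq:necessary_and_sufficient_condition_for_thinning} rules out, and one has to invoke it at each step of the induction (noting that $\P(Z_{1:k-1}=z_{1:k-1},X_{1:k-1}=x_{1:k-1})>0$ is preserved along the constructed chain). A secondary subtlety is the case $\P(X_k=1\mid X_{1:k-1}=x_{1:k-1})=0$ together with $\P(Y_k=1\mid Y_{1:k-1}=z_{1:k-1})=0$: there the ratio is a $0/0$ which one simply defines to be $0$, consistently with the indicator $\mathbbm{1}_{\{x_k=1\}}$ being multiplied in, and the cancellation argument above still goes through because that history contributes $0$ to both sides. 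Once these degenerate cases are handled, the rest is the routine chain-rule manipulation sketched above.
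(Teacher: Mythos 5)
Your proposal is correct and follows essentially the same route as the paper's proof: well-definedness of the coupling from the inequality hypothesis, $Z\subset X$ from the indicator, and the identity $\P(Z_k=1\mid Z_{1:k-1}=z_{1:k-1})=\P(Y_k=1\mid Y_{1:k-1}=z_{1:k-1})$ obtained by conditioning on the $X$-history and cancelling the denominator, with the same treatment of the degenerate case $\P(X_k=1\mid X_{1:k-1}=x_{1:k-1})=0$. The only cosmetic difference is that you average over $x_{1:k-1}$ explicitly, while the paper observes that the conditional probability is constant in $x_{1:k-1}$ and deduces the conditional independence directly.
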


\begin{proof}
Let us first discuss the definition of the coupling $(X,Z)$.
With the conditions~\eqref{eq:necessary_and_sufficient_condition_for_thinning},
the ratios defining the conditional probabilities of Equation~\eqref{eq:definitions_sequential_thinning} are ensured to be between $0$ and $1$ 
(if the conditional events have non zero probabilities).
Hence the conditional probabilities allows us to construct sequentially the distribution of the random vector 
$(X_1,Z_1,X_2,Z_2,\dots,X_N,Z_N)$ of length $2N$, and thus the coupling is well-defined.
Furthermore, as Equation \eqref{eq:definitions_sequential_thinning} is satisfied, $Z_k = 1$ only if $X_k =1$, so one has $Z\subset X$.

Let us now show that $Z$ has the same distribution as $Y$. By complementarity of the events $\{Z_k = 0\}$ and $\{Z_k = 1\}$, it is enough to show that for all $k\in\{1,\dots,N\}$, and $z_1,\dots,z_{k-1}$ such that $\P(Z_{1:k-1} = z_{1:k-1})>0$,
\begin{equation}
 \P(Z_k = 1|Z_{1:k-1} = z_{1:k-1}) = \P(Y_k = 1|Y_{1:k-1} = z_{1:k-1}).
\label{eq:same_conditional_probability}
\end{equation}

Let $k\in\{1,\dots,N\}$, $(z_{1:k-1},x_{1:k-1}) \in\{0,1\}^{2(k-1)}$, such that $
\P(Z_{1:k-1}=z_{1:k-1},X_{1:k-1}=x_{1:k-1}) > 0.
$
Since $Z \subset X$, $\{Z_k = 1\} = \{Z_k = 1, X_k = 1\}$.
Suppose first that $\P(X_k = 1|X_1 = x_1,\dots, X_{k-1} = x_{k-1})\neq 0$. Then
$$
\begin{aligned}
& \P(Z_k = 1|Z_{1:k-1} = z_{1:k-1},X_{1:k-1}=x_{1:k-1})
\\
&=  \hspace{-0.05cm}\P(Z_k = 1,X_k = 1|Z_{1:k-1} = z_{1:k-1}, X_{1:k-1}=x_{1:k-1})
\\
& = \hspace{-0.22cm}
\begin{array}{l}
\P(Z_k = 1|Z_{1:k-1} = z_{1:k-1},X_{1:k-1}=x_{1:k-1},X_k = 1)
\\
\times \P(X_k = 1|Z_{1:k-1} = z_{1:k-1}, X_{1:k-1}=x_{1:k-1})
\end{array}
\\
& = \P(Y_k = 1|Y_{1:k-1} = z_{1:k-1}), \text{ by Equations \eqref{eq:definitions_sequential_thinning}}.
\end{aligned}
$$
If $\P(X_k = 1|X_{1:k-1}=x_{1:k-1}) = 0$, then
$\displaystyle
\P(Z_k = 1|Z_{1:k-1} = z_{1:k-1},X_{1:k-1}=x_{1:k-1}) = 0
$ and using~\eqref{eq:necessary_and_sufficient_condition_for_thinning}, $\P(Y_k = 1|Y_{1:k} = z_{1:k}) =0$.
Hence the identity
$$ \P(Z_k = 1|Z_{1:k-1} = z_{1:k-1},X_{1:k-1}=x_{1:k-1})
 = \P(Y_k = 1|Y_{1:k-1} = z_{1:k-1})$$
is always valid. 
Since the values $x_1,\dots,x_{k-1}$ do not influence this conditional probability, one can conclude that given $(Z_1,\dots,Z_{k-1})$, 
$Z_k$ is independent of $X_1,\dots,X_{k-1}$, and thus \eqref{eq:same_conditional_probability} is true.
\end{proof}

The characterization of the thinning defined here allows both extreme cases: there can be no pre-selection of points by $X$, meaning that $X = \Y$ and that the DPP $Y$ is sampled by Algorithm \ref{algo_sequential_sampling_dpp}, or there can be no thinning at all, meaning that the final process $Y$ can be equal to the dominating process $X$. Regarding sampling acceleration, a good dominating process $X$ must be sampled quickly and with a cardinality as close as possible to $|Y|$.

\subsection{Sequential Thinning Algorithm for DPPs}

In this section, we use the sequential thinning approach, where $Y$ is a DPP of kernel $K$ on the ground set $\Y$, and $X$ is a Bernoulli point process (BPP). BPPs are the fastest and easiest point processes to sample. $X$ is a Bernoulli process if the components of the vector $(X_1,\dots, X_N)$ are independent. Its distribution is determined by the probability of occurrence of each point $k$, that we denote by $q_k = \P(X_k=1)$. Due to the independence property,  the conditions~\eqref{eq:necessary_and_sufficient_condition_for_thinning} simplifies to
\begin{equation*}
\begin{array}{c}
\P(Z_{1:k-1}=z_{1:k-1},X_{1:k-1}=x_{1:k-1}) > 0\\
\text{implies}\\
\P(Y_k = 1|Y_{1:k-1} = z_{1:k-1}) \leq q_k.
\end{array}
\end{equation*}

The second inequality does not depend on $x$, hence it must be valid as soon as there exists a vector $x$ such that
$\P(Z_{1:k-1}=z_{1:k-1},X_{1:k-1}=x_{1:k-1}) > 0$, that is, as soon as $\P(Z_{1:k-1}=z_{1:k-1})>0$.
Since we want $Z$ to have the same distribution as $Y$, we finally obtain the conditions
$$
\forall y\in\{0,1\}^N,\; \P(Y_{1:k-1}=y_{1:k-1})>0 ~ \text{implies }
\P(Y_k = 1|Y_{1:k-1}=y_{1:k-1}) \leq q_k.
$$

Ideally, we want the $q_k$ to be as small as possible to ensure that the cardinality of $X$ is as small as possible. So
we look for the optimal values $q_k^*$, that is,
$$
q_k^* 
= \hspace{-0.3cm}\max_{
\begin{array}{c}
\scriptstyle {(y_{1:k-1})\;\in\;\{0,1\}^{k-1}\;\text{s.t.}}\\ 
\scriptstyle{\P(Y_{1:k-1}\; =\; y_{1:k-1})\;>\;0}
\end{array}}
\hspace{-0.3cm}\P(Y_k = 1|Y_{1:k-1} = y_{1:k-1}).
$$
A priori, computing $q_k^*$ would raise combinatorial issues. However, due to the repulsive nature of DPPs, we have the following proposition.

\begin{proposition}
Let $A, B\subset\Y$ be two disjoint sets such that $\P(A\subset Y,~B\cap Y = \emptyset)\neq 0$, and let $k \neq l \in \overline{A\cup B}$.
If $\P(A\cup\{l\} \subset Y,~B\cap Y = \emptyset) >0$, then
$$
\P(\{k\}\subset Y | A\cup\{l\} \subset Y,~B\cap Y = \emptyset)\leq \P(\{k\}\subset Y | A \subset Y,~B\cap Y = \emptyset).
$$
%that is, the knowledge that there is a point at $l$ always decreases the pointwise conditional probability at $k$.
If \, $\P(A \subset Y,~(B\cup\{l\})\cap Y = \emptyset) >0$, then
$$
\P(\{k\}\subset Y | A \subset Y,~(B\cup\{l\})\cap Y = \emptyset)\geq \P(\{k\}\subset Y | A \subset Y,~B\cap Y = \emptyset).
$$
%that is, the knowledge that there is no point at $l$ always increases the pointwise conditional probability at $k$.
Consequently, for all $k\in\Y$, if $y_{1:k-1} \leq z_{1:k-1}$ (where $\leq$ stands for the inclusion partial order) 
are two states for $Y_{1:k-1}$, then 
$$
\P(Y_k = 1|Y_{1:k-1} = y_{1:k-1}) \geq \P(Y_k = 1|Y_{1:k-1} = z_{1:k-1}).
$$
In particular, $\forall k \in \{1,\dots, N \}$, if \, $\P(Y_{1:k-1} = 0_{1:k-1}) > 0$
then
\begin{align*}
q_k^*&= \P(Y_k = 1|Y_{1:k-1} = 0_{1:k-1})\\
&= K(k,k) + K_{k \times \{1:k-1\}} ((I-K)_{\{1:k-1\}})^{-1} K_{ \{1:k-1\}\times k}.
\end{align*}
\end{proposition}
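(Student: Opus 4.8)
The plan is to prove the three assertions in sequence, since each builds on the previous one and the last two are essentially corollaries of the first.

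\textbf{Step 1: the two monotonicity inequalities.} I would start from the pointwise conditional probability formula in Corollary~\ref{prop_pointwise_conditional_proba_dpp}, which expresses $\P(\{k\}\subset Y\mid A\subset Y, B\cap Y=\emptyset)$ as a Schur complement $H^B(k,k) - H^B_{\{k\}\times A}(H^B_A)^{-1}H^B_{A\times\{k\}}$, equivalently as the ratio $\det(H^B_{A\cup\{k\}})/\det(H^B_A)$. The first inequality (adding a point $l$ to $A$) says that enlarging the conditioning-in set decreases the inclusion probability of another point $k$; this is the statement that the ``conditional variance'' $H^B(k,k)-H^B_{\{k\}\times A}(H^B_A)^{-1}H^B_{A\times\{k\}}$ is nonincreasing when $A$ grows. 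Since $H^B$ is symmetric positive semi-definite (stated in the paper just before the General Marginal theorem), this is the standard fact that Schur complements of a PSD matrix are monotone: passing from $A$ to $A\cup\{l\}$ only subtracts a further PSD correction term. I would make this precise by writing the Schur complement of $H^B_{A\cup\{l\}\cup\{k\}}$ with respect to its $(A\cup\{l\})$-block as the Schur complement with respect to the $A$-block of the Schur complement with respect to $\{l\}$ — the ``quotient formula'' for Schur complements — and noting the extra term is $\le 0$.

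\textbf{Step 2: the inequality for enlarging $B$.} For the second inequality I would exploit that conditioning on $\{l\}\cap Y=\emptyset$ is itself a DPP operation: the process $Y$ conditioned on $l\notin Y$ is again a DPP, and more to the point, $H^{B\cup\{l\}}$ is obtained from $H^B$ by exactly the same Schur-complement-style update as $H^{\{l\}}$ is obtained from $K$. Concretely, using the block form $H^{B\cup\{l\}} = H^B + H^B_{\Y\times\{l\}}((I-H^B)_{\{l\}})^{-1}H^B_{\{l\}\times\Y}$ — which should follow from the definition of $H^B$ by the same manipulation that produced the General Marginal theorem, or by a short direct computation — one sees $H^{B\cup\{l\}} \succeq H^B$ on the relevant index set, and the conditional inclusion probability of $k$, being $H^\bullet(k,k)$ minus a Schur correction, increases. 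The cleaner route, which I would prefer, is to phrase everything via the ratio formula $\P(A\subset Y,B\cap Y=\emptyset)=\det((I-K)_B)\det(H^B_A)$ and reduce both inequalities to log-concavity / monotonicity properties of these determinants; but the matrix-PSD argument is the most transparent and I expect the main work to sit here, in verifying the update formula for $H^{B\cup\{l\}}$.

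\textbf{Step 3: the partial-order consequence and the formula for $q_k^*$.} Given two in-configurations $y_{1:k-1}\le z_{1:k-1}$, I would move from $y$ to $z$ one coordinate at a time: each coordinate that changes from $0$ to $1$ either moves an index from the ``out'' set $B$ into the ``in'' set $A$ — which, by combining the first inequality (adding to $A$) and the reverse of the second (removing from $B$), only decreases $\P(Y_k=1\mid\cdots)$. (One has to be slightly careful that the intermediate configurations all have positive probability; this follows because the events are nested in probability and $\P(Y_{1:k-1}=z_{1:k-1})>0$ forces the intermediate marginals to be positive as well, or one simply applies the inequalities only along a chain of configurations known to be admissible.) Monotone decrease along the inclusion order immediately gives that the maximum over admissible $y_{1:k-1}$ is attained at $y_{1:k-1}=0_{1:k-1}$, provided that configuration is admissible, i.e. $\P(Y_{1:k-1}=0_{1:k-1})>0$. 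Finally, plugging $A=\emptyset$, $B=\{1,\dots,k-1\}$ into Corollary~\ref{prop_pointwise_conditional_proba_dpp} gives $q_k^* = H^B(k,k) = K(k,k) + K_{k\times\{1:k-1\}}((I-K)_{\{1:k-1\}})^{-1}K_{\{1:k-1\}\times k}$, which is exactly the claimed expression. The only genuine obstacle I anticipate is Step~2 — establishing cleanly that enlarging the out-set $B$ corresponds to a PSD-increasing update of $H^B$ — since Steps~1 and~3 are then routine Schur-complement and order arguments.
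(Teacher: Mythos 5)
Your proof is correct, but it diverges from the paper's at the second inequality, in an interesting way. For the first inequality your Schur-complement quotient argument is the paper's argument in different clothing: the paper writes $H^B_{\{k\}\times A}(H^B_A)^{-1}H^B_{A\times\{k\}}$ as $\|(L^B_A)^{-1}H^B_{A\times\{k\}}\|_2^2$ via a Cholesky factorization ordered so that $l$ comes right after $A$, and observes that passing to $A\cup\{l\}$ appends one line to the triangular system, which can only increase the norm; that appended line is exactly your nonpositive quotient-formula correction term. For the second inequality the paper does something much shorter: it applies the first inequality to the complementary DPP $\overline{Y}$ with kernel $I-K$, since adding $l$ to the exclusion set for $Y$ is adding $l$ to the inclusion set for $\overline{Y}$. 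Your route — establishing the rank-one update $H^{B\cup\{l\}} = H^B + \bigl((I-H^B)(l,l)\bigr)^{-1}H^B_{\Y\times\{l\}}H^B_{\{l\}\times\Y}$ (which does follow from the block-inverse of $(I-K)_{B\cup\{l\}}$, with the Schur complement of the $B$-block equal to $1-H^B(l,l)>0$ under your hypothesis) and then invoking Loewner monotonicity of the map $M\mapsto M(k,k)-M_{\{k\}\times A}(M_A)^{-1}M_{A\times\{k\}}$ — is heavier but also valid, and it buys you an explicit conditional-kernel update that the paper never writes down. One caution: your phrasing ``$H^\bullet(k,k)$ minus a Schur correction, increases'' should be justified by the variational characterization $\min_y (y^*,1)\,M_{A\cup\{k\}}\,(y^*,1)^*$ of the full Schur complement, since $H^{B\cup\{l\}}\succeq H^B$ increases the diagonal entry and the correction term separately, and only the variational form shows the difference moves the right way. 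Your Step 3 (interpolating along the inclusion order and plugging $A=\emptyset$, $B=\{1,\dots,k-1\}$ into the corollary) is the intended argument; the admissibility of intermediate configurations that you flag is glossed over by the paper as well, so you are not missing anything the authors supplied.
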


\begin{proof}
Recall that by Proposition~\ref{prop_pointwise_conditional_proba_dpp},
$\displaystyle P(\{k\}\subset Y | A \subset Y,~B\cap Y = \emptyset)= H^B(k,k) - H^B_{\{k\}\times A} (H^B_A)^{-1} H^B_{A\times \{k\}}$.
Let $l \notin A\cup B\cup\{k\}$.
Consider $T^B$ the Cholesky decomposition of the matrix $H^B$ obtained with the following ordering the coefficients: $A$, $l$, the remaining coefficients of $\Y\setminus(A\cup\{l\})$.
Then, the restriction $T^B_A$ is the Cholesky decomposition (of the reordered) $H^B_A$ and thus
\begin{equation*}
\begin{aligned}
H^B_{\{k\}\times A} (H^B_A)^{-1} H^B_{A\times \{k\}}
&= H^B_{\{k\}\times A} (T^B_A(T^B_A)^*)^{-1} H^B_{A\times \{k\}}\\
&= \|(T^B_A)^{-1} H^B_{A\times \{k\}}\|_2^2.
\end{aligned}
\end{equation*}
Similarly,
$$
H^B_{\{k\}\times A\cup\{l\}} (H^B_{A\cup\{l\}})^{-1} H^B_{A\cup\{l\}\times \{k\}}
= \|(T^B_{A\cup\{l\}})^{-1} H^B_{A\cup\{l\}\times \{k\}}\|_2^2.
$$
Now note that solving the triangular system with $b = (T^B_{A\cup\{l\}})^{-1} H^B_{A\cup\{l\}\times \{k\}}$ amounts solving the triangular system 
with $(T^B_A)^{-1} H^B_{A\times \{k\}}$ and an additional line at the bottom.
Hence, one has $\|b\|_2^2\geq \|(T^B_A)^{-1} H^B_{A\times \{k\}}\|_2^2$.

Consequently, provided that $\P(A\cup\{l\} \subset Y,~B\cap Y = \emptyset)>0$,
$$
\P(\{k\}\subset Y | A\cup\{l\} \subset Y,~B\cap Y = \emptyset) 
\leq \P(\{k\}\subset Y | A \subset Y,~B\cap Y = \emptyset).
$$
The second inequality is obtained by complementarity in applying the above inequality to the DPP $\overline{Y}$ with $B\cup\{l\} \subset \overline{Y}$ and $A\cap \overline{Y} = \emptyset$.
\end{proof}

As a consequence, an admissible choice for the distribution of the Bernoulli process is

\begin{equation}
q_k =
\begin{cases}
\P(Y_k=1|Y_{1:k-1} = 0_{1:k-1}) & \text{if } \P(Y_{1:k-1} = 0_{1:k-1})>0,
\\ 
1 & \text{otherwise}.
\end{cases}
\label{eq:qk_proba_bernoulli_dominating_dpp}
\end{equation}

Note that if for some index $k$, $\P(Y_{1:k-1} = 0_{1:k-1})>0$ is not satisfied, 
then for all the subsequent indexes $l\geq k$, $q_l=1$, that is the Bernoulli process becomes degenerate and contains all the points after $k$.
%If it exists, the critical value $k$ is characterized by $\P(Y_{1:k-1} = 0_{1:k-2})>0$ and $\P(Y_{1:k-1} = 0_{1:k-1})=0$.
%This happens if and only if $p_{k-1}=1$.
%If, for some DPP, this happens for a small value of $k$, one can always change the visiting order for the sites to decrease the mean number of points of the Bernoulli process $X$.
In the remaining of this section, $X$ will denote a Bernoulli process with probabilities $(q_k)$ given by~\eqref{eq:qk_proba_bernoulli_dominating_dpp}.

As discussed in the previous section, in addition to being easily simulated, one would like the cardinality of $X$ to be close to the one of $Y$, the final sample.
The next proposition shows that this is verified if all the eigenvalues of $K$ are strictly less than $1$.

\begin{proposition}[$|X|$ is proportional to $|Y|$]
\label{prop_cardX_proportional_to_cardY}
Suppose that $P(Y = \emptyset) = \det(I-K) >0$ and denote by $\lambda_{\max}(K)\in[0,1)$ the maximal eigenvalue of $K$.
Then,
\begin{equation}
\mathbb{E}(|X|) \leq \left(1 + \frac{\lambda_{\max}(K)}{2 \left(1-\lambda_{\max}(K)\right)} \right) \mathbb{E}(|Y|).
\label{eq:cardX_proportional_to_cardY} 
\end{equation}	
\end{proposition}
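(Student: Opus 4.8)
The plan is to evaluate both expectations explicitly and reduce the inequality to a bound on a sum of Schur-complement-type quantities. First I would record that, since $\P(A\subset Y)=\det(K_A)$, one has $\P(Y_k=1)=K(k,k)$, hence $\E(|Y|)=\sum_{k=1}^N K(k,k)=\tr(K)$, while by independence $\E(|X|)=\sum_{k=1}^N q_k$. Next I would observe that the assumption $\det(I-K)>0$ forces $\P(Y_{1:k-1}=0_{1:k-1})>0$ for every $k\in\{1,\dots,N\}$, because this is a marginal of the event $\{Y=\emptyset\}$ of positive probability; consequently every $q_k$ is given by the explicit formula of the preceding proposition, $q_k = K(k,k) + K_{k\times S_k}\big((I-K)_{S_k}\big)^{-1}K_{S_k\times k}$ with $S_k:=\{1,\dots,k-1\}$. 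Thus the claim reduces to showing
$$
\sum_{k=1}^N K_{k\times S_k}\big((I-K)_{S_k}\big)^{-1}K_{S_k\times k}\;\le\;\frac{\lambda_{\max}(K)}{2\big(1-\lambda_{\max}(K)\big)}\,\tr(K).
$$

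For the pointwise bound I would use that $\lambda_{\max}(K)<1$, so the eigenvalues of $I-K$ lie in $[\,1-\lambda_{\max}(K),\,1\,]$; by Cauchy eigenvalue interlacing each principal submatrix $(I-K)_{S_k}$ has smallest eigenvalue at least $1-\lambda_{\max}(K)>0$, hence it is invertible and $\big((I-K)_{S_k}\big)^{-1}\preceq \big(1-\lambda_{\max}(K)\big)^{-1}I$. Applying this to the vector $K_{S_k\times k}$ yields
$$
K_{k\times S_k}\big((I-K)_{S_k}\big)^{-1}K_{S_k\times k}\;\le\;\frac{1}{1-\lambda_{\max}(K)}\sum_{j=1}^{k-1}|K(k,j)|^2 .
$$

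Summing over $k$ and using $\sum_{k=1}^N\sum_{j=1}^{k-1}|K(k,j)|^2=\tfrac12\sum_{k\ne j}|K(k,j)|^2\le \tfrac12\sum_{k,j}|K(k,j)|^2=\tfrac12\tr(K^*K)$, together with $\tr(K^*K)=\tr(K^2)=\sum_i\lambda_i(K)^2\le \lambda_{\max}(K)\sum_i\lambda_i(K)=\lambda_{\max}(K)\tr(K)$ (valid since $K$ is Hermitian positive semi-definite), gives $\sum_{k=1}^N q_k\le \tr(K)+\frac{\lambda_{\max}(K)}{2(1-\lambda_{\max}(K))}\tr(K)$, which is exactly $\E(|X|)\le\big(1+\frac{\lambda_{\max}(K)}{2(1-\lambda_{\max}(K))}\big)\E(|Y|)$.

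The argument is essentially routine once organized this way; the only point needing care is the interlacing step that guarantees $\big((I-K)_{S_k}\big)^{-1}\preceq\big(1-\lambda_{\max}(K)\big)^{-1}I$, and — more a matter of bookkeeping than difficulty — the factor $\tfrac12$ coming from summing $|K(k,j)|^2$ only over the strictly lower-triangular part, which is precisely what produces the $\tfrac{1}{2(1-\lambda_{\max}(K))}$ in the statement. I do not expect a genuine obstacle beyond these.
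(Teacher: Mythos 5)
Your proposal is correct and follows essentially the same route as the paper: express $q_k$ via the pointwise conditional probability formula, bound the quadratic form using that the smallest eigenvalue of $(I-K)_{\{1:k-1\}}$ is at least $1-\lambda_{\max}(K)$, and then bound the sum of the strictly triangular entries by $\tfrac12\|K\|_F^2\le\tfrac12\lambda_{\max}(K)\tr(K)$. Your version merely makes explicit two points the paper leaves implicit (why $\P(Y_{1:k-1}=0_{1:k-1})>0$ under the hypothesis, and the interlacing justification for the submatrix eigenvalue bound), so there is nothing to correct.
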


\begin{proof}
We know that
$q_k = K(k,k) + K_{\{k\}\times\{1:k-1\}} ((I-K)_{\{1:k-1\}})^{-1}K_{\{1:k-1\}\times\{k\}}$, by Proposition~\ref{prop_pointwise_conditional_proba_dpp}.
Since 
$$
\|((I-K)_{\{1:k-1\}})^{-1}\|_{\mathcal{M}_{k-1}(\C)} 
= \tfrac{1}{1 - \lambda_{\max}(K_{\{1:k-1\}})}
$$ 
and $\lambda_{\max}(K_{\{1:k-1\}}) \leq \lambda_{\max}(K)$, one has
$$ K_{\{k\}\times\{1:k-1\}} ((I-K)_{\{1:k-1\}})^{-1}K_{\{1:k-1\}\times\{k\}}
 \leq  \tfrac{1}{1-\lambda_{\max}(K)}  \|K_{\{1:k-1\}\times\{k\}}\|_2^2.
$$
Summing all these inequalities gives
$$
\mathbb{E}(|X|)
\leq \Tr(K) + \tfrac{1}{1-\lambda_{\max}(K)} \sum_{k=1}^{N} \|K_{\{1:k-1\}\times\{k\}}\|_2^2.
$$
The last term is the Frobenius norm of the upper triangular part of $K$, hence
in can be bounded by $\frac{1}{2} \|K\|^2_{F} = \frac{1}{2} \sum_{j=1}^N \lambda_j(K)^2$.
Since $\lambda_j(K)^2 \leq \lambda_j(K)\lambda_{\max}(K)$, 
 $\sum_{j=1}^N \lambda_j(K)^2 \leq \lambda_{\max}(K) \Tr(K) = \lambda_{\max}(K) \mathbb{E}(|Y|)$.
\end{proof}

\begin{algorithm}[h!]
\begin{enumerate}
% \item Change the site order with a random permutation $\sigma$.
\item Compute sequentially the probabilities $\P(X_k=1)=q_k$ of the Bernoulli process $X$: 
\begin{itemize}[leftmargin=-0.1cm]
\item Compute the Cholesky decomposition $T$ of the matrix $I-K$.
\item For $k=1$ to $N$:
\begin{itemize}
\item If $q_{k-1} < 1$ (with the convention $q_0=0$), 
$$
q_k = K(k,k) + \|T_{\{1,\dots,k-1\}}^{-1} K_{\{1,\dots,k-1\}\times\{k\}}\|_2^2.
$$
\item Else, $q_{k}=1$.
\end{itemize}
\end{itemize}
\item Draw the Bernoulli process $X$. Let $m=|X|$ and $k_1 < k_2 < \dots < k_m$ be the points of $X$.
\item Apply the sequential thinning to the points of $X$: 
\begin{itemize}[leftmargin=-0.1cm]
\item Attempt to add sequentially each point of $X$ to $Y$:\\
Initialize $A\leftarrow \emptyset$ %(sites of $Y$ within $\{1,\dots,k\}$), 
and $B\leftarrow \{1,\dots, k_1-1\}$. % (sites not in $Y$ within $\{1,\dots,k\}$).\\
\\
For $j=1$ to $m$
\begin{itemize}
 \item If $j >1$, $B\leftarrow B \cup \{k_{j-1}+1,\dots, k_j-1\}.$
 \item Compute the conditional probability $p_{k_j} = \P(\{k_j\}\subset Y | A \subset Y,~B\cap Y = \emptyset)$ (see Formula~\eqref{eq_pointwise_conditional_proba_dpp}):
 \begin{itemize}
    \item Update $T^B$ the Cholesky decomposition of $(I-K)_B$ (see Appendix~\ref{sec:cholesky_update}).
    \item Compute $J^B = (T^B)^{-1} K_{B\times A\cup\{k_j\}}$.
    \item Compute $H^B_{A\cup\{k\}} = K_{A\cup\{k_j\}} + (J^B)^t J^B$.
    \item Compute $p_{k_j} = H^B(k_j,k_j) - H^B_{\{k_j\}\times A} (H^B_A)^{-1} H^B_{A\times \{k_j\}}$.
 \end{itemize} 
 \item Add $k_j$ to $A$ with probability $\frac{p_{k_j}}{q_{k_j}}$ or to $B$ otherwise.
\end{itemize}
\item Return $A$.
\end{itemize}
% \item Put the sites of $Y = A$ in proper order by inverting the random permutation $\sigma$. 
\end{enumerate}
\caption{Sequential thinning algorithm of a DPP with kernel $K$} 
\label{alg:sequential_thinning_dpp_bernoulli}
\end{algorithm}

We can now introduce the final sampling algorithm that we call sequential thinning algorithm (Algorithm \ref{alg:sequential_thinning_dpp_bernoulli}). It presents the different steps of our sequential thinning algorithm to sample a DPP of kernel $K$.
The first step is a preprocess that must be done only once for a given matrix $K$.
Step 2 is trivial and fast.
The critical point is to sequentially compute the conditional probabilities 
$p_k = \P(\{k\}\subset Y | A \subset Y,~B\cap Y = \emptyset)$ for each point of $X$.
Recall that in Algorithm~\ref{algo_sequential_sampling_dpp} 
we use a Cholesky decomposition of the matrix $(I-K)_B$ which is updated by adding a line each time a point is added in $B$.
Here, the inverse of the matrix $(I-K)_B$ is only needed when visiting a point $k\in X$, so one updates the Cholesky decomposition by a single block, where the new block corresponds to all indices added to $B$ in one iteration (see Appendix \ref{sec:cholesky_update}). The \textsc{Matlab} implementation used for the experiments is available online (\url{https://www.math-info.univ-paris5.fr/~claunay/exact_sampling.html}), together with a Python version of this code, using the PyTorch library. Note that, very recently, Guillaume Gautier \cite{Gautier_thesis_2020} proposed an alternative computation of the Bernoulli probabilities, that generate the dominating point process in the first step of Algorithm \ref{alg:sequential_thinning_dpp_bernoulli}, so that it only requires the diagonal coefficients of the Cholesky decomposition $T$ of $I-K$.

\subsection{Computational Complexity}
Recall that the size of the ground set $\Y$ is $N$ and the size of the final sample is $|Y|=n$. Both algorithms introduced in this paper have running complexities of order $O(N^3)$, as the spectral algorithm. Yet, if we get into the details, the most expensive task in the spectral algorithm is the computation of the eigenvalues and the eigenvectors of the kernel $K$. As this matrix is Hermitian, the common routine to do so is the reduction of $K$ to some tridiagonal matrix to which the QR decomposition is applied, meaning that it is decomposed into the product of an orthogonal matrix and an upper triangular matrix. When $N$ is large, the total number of operations is approximately $\frac{4}{3}N^3$ \cite{Trefethen_Numerical_Linear_Algebra_1997}. In Algorithms \ref{algo_sequential_sampling_dpp} and \ref{alg:sequential_thinning_dpp_bernoulli}, one of the most expensive operations is the Cholesky decomposition of several matrices. We recall that the Cholesky decomposition of a matrix of size $N\times N$ costs approximately $\frac{1}{3}N^3$ computations, when $N$ is large \cite{Mayers_Suli_Introduction_Numerical_Analysis_2003}. Concerning the sequential algorithm \ref{algo_sequential_sampling_dpp}, at each iteration $k$, the number of operations needed is of order $|B|^2|A| + |B||A|^2 + |A|^3$, where $|A|$ is the number of selected points at step $k$ so it's lower than $n$, and $|B|$ the number of unselected points, bounded by $k$. Then, when $N$ tends to infinity, the total number of operations in Algorithm \ref{algo_sequential_sampling_dpp} is lower than $\frac{n}{3}N^3 + \frac{n^2}{2}N^2 + n^3N$ or $O(nN^3)$, as in general $n \ll N$. Concerning Algorithm \ref{alg:sequential_thinning_dpp_bernoulli},  the sequential thinning from $X$, coming from Algorithm \ref{algo_sequential_sampling_dpp}, costs $O(n|X|^3)$. Recall that $|X|$ is proportional to $|Y|=n$ when the eigenvalues of $K$ are smaller than 1 (see Equation \eqref{eq:cardX_proportional_to_cardY}) so this step costs $O(n^4)$. Then, the Cholesky decomposition of $I-K$ is the most expensive operation in Algorithm \ref{alg:sequential_thinning_dpp_bernoulli} as it costs approximately $\frac{1}{3}N^3$. In this case, the overall running complexity of the sequential thinning algorithm is of order $\frac{1}{3}N^3$, which is 4 times less than the spectral algorithm. When some eigenvalues of $K$ are equal to 1, Equation \eqref{eq:cardX_proportional_to_cardY} does not hold anymore so, in that case, the running complexity of Algorithm \ref{alg:sequential_thinning_dpp_bernoulli} is only bounded by $O(nN^3)$.

We will retrieve this experimentally as, depending on the application or on the kernel $K$, this Algorithm \ref{alg:sequential_thinning_dpp_bernoulli} is able to speed up the sampling of DPPs. Note that in the previous computations, we have not taken into account the possible parallelization of the sequential thinning algorithm. As a matter of fact, the Cholesky decomposition is parallelizable \cite{George_parallelCholesky_1986}. Incorporating this parallel computations would probably speed up the sequential thinning algorithm, since the Cholesky decomposition of $I-K$ is the most expensive operation when the expected cardinality $|Y|$ is low. The last part of the algorithm, the thinning procedure, operates sequentially, so it is not parallelizable. These comments on the complexity and running times highly depends on the implementation, on the choice of the programming language and speed up strategies, so they mainly serve as an illustration.

\section{Experiments}

\subsection{DPP models for runtime tests}
\label{subsec_expe_kernels}
In the following section, we use the common notation of $L$-ensembles, with matrix $L = K(I-K)^{-1}$.
We present the results using four different kernels:
\begin{enumerate}[label=(\alph*)]
\item A random kernel: $ K = Q^{-1}DQ$, where 
$D$ is a diagonal matrix with uniformly distributed random values in $(0,1)$ and
$Q$ an unitary matrix created from the QR decomposition of a random matrix.
\label{item_random_kernel}
    
\item A discrete analog to the Ginibre kernel: $K = L(I+L)^{-1}$ with for all $x_1, x_2 \in \Y= \{1,\dots,N \}$,
$$L(x_1,x_2) = \frac{1}{\pi} e^{ -\frac{1}{2}( |x_1|^2 +|x_2|^2) + x_1 x_2},$$
\label{item_ginibre_kernel}

\item A patch-based kernel: 
Let $u$ be a discrete image and $\Y = \mathcal{P}$ a subset of all its patches, i.e. square sub-images of size $w\times w$ in $u$. Define $K = L(I+L)^{-1}$ where for all
$ P_1, P_2 \in \mathcal{P}$,
$$L(P_1,P_2) = \exp\left(-\frac{\|P_1-P_2\|_2^2}{s^2}\right)$$
where $s>0$ is called the bandwidth or scale parameter. 
We will detail the definition and the use of this kernel in Section~\ref{subsec_sampling_patches}.
\label{item_patch_kernel}

\item A projection kernel: $ K = Q^{-1}DQ$, where $D$ is a diagonal matrix with the $n$ first coefficients equal to 1, the others, equal to 0, and $Q$ is a random unitary matrix as for model \ref{item_random_kernel}.
\label{item_projection_kernel}

\end{enumerate}

It is often essential to control the expected cardinality of the point process. 
For case~\ref{item_projection_kernel} the cardinality is fixed to $n$.
For the three other cases, we use a procedure similar to the one developed in \cite{Barthelme_asymptDPPkDPP_2019}.
Recall that if $Y \sim \DPP(K)$ and $K = L(I+L)^{-1}$,  $\displaystyle \mathbb{E}(|Y|) = \tr(K) = \sum_{i \in \Y} \lambda_i = \sum_{i \in \Y} \frac{\mu_i}{1+\mu_i} $, where $(\lambda_i)_{i \in \Y}$ are the eigenvalues of $K$ and $(\mu_i)_{i \in \Y}$ are the eigenvalues of $L$ \cite{Hough_et_al_dpp_and_independence_2006,Kulesza_Taskar_dpp_for_machine_learning_2012}. 
Given an initial matrix $L = K(I-K)^{-1}$ and a desired expected cardinality $\mathbb{E}(|Y|) = n$, we run a binary search algorithm to find $\alpha > 0$ such that $\displaystyle \sum_{i \in \Y} \frac{\alpha \mu_i}{1+\alpha \mu_i} = n$. Then, we use the kernels $L_{\alpha} = \alpha L$ and $K_{\alpha} = L_{\alpha}(I+L_{\alpha} )^{-1}$.

\subsection{Runtimes}

For the following experiments, we ran the algorithms on a laptop HP Intel(R) Core(TM) i7-6600U CPU and we use the software \textsc{Matlab} R2018b. Note that the computational time results depend on the programming language and the use of optimized functions by the software. Thus, the following numerical results are mainly indicative.

First, let us compare the sequential thinning algorithm (Algorithm \ref{alg:sequential_thinning_dpp_bernoulli}) presented here with the two main sampling algorithms: the classic spectral algorithm (Algorithm \ref{alg_spectral_simulation_dpp}) and the ``naive'' sequential algorithm (Algorithm \ref{algo_sequential_sampling_dpp}). Figure \ref{fig:running_times_3algo} presents the running times of the three algorithms as a function of the total number of points of the ground set. Here, we have chosen a patch-based kernel~\ref{item_patch_kernel}. The expected cardinality $\mathbb{E}(|Y|)$ is constant, equal to $20$. As foreseen, the sequential algorithm (Algorithm \ref{algo_sequential_sampling_dpp}) is far slower than the two others. Whatever the chosen kernel and the expected cardinality of the DPP, this algorithm is not competitive.
Note that the sequential thinning algorithm uses this sequential method after sampling the particular Bernoulli process. But we will see that this first dominating step can be very efficient and lead to a relatively fast algorithm.\\

\begin{figure}[h!]
\centering
	\includegraphics[scale=0.4]{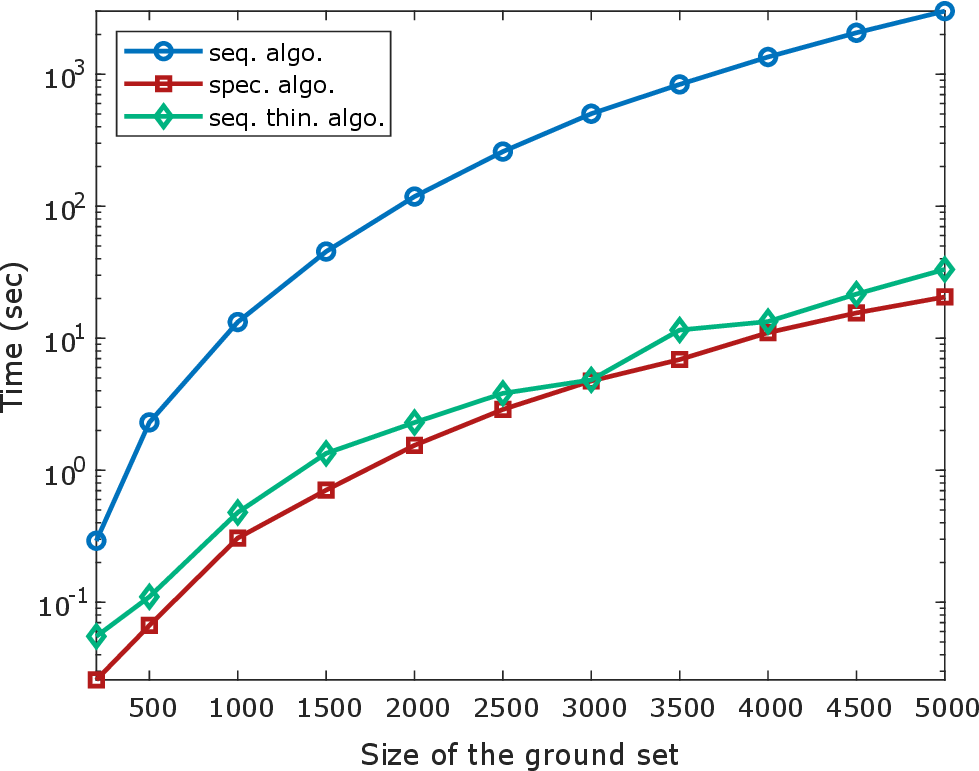}
	\caption{Running times of the 3 studied algorithms in function of the size of the ground set, using a patch-based kernel. %The number of sampled points is constant (20).
	}
	\label{fig:running_times_3algo}
\end{figure}

\begin{figure*}[h]
\begin{center}

	\includegraphics[scale=0.195]{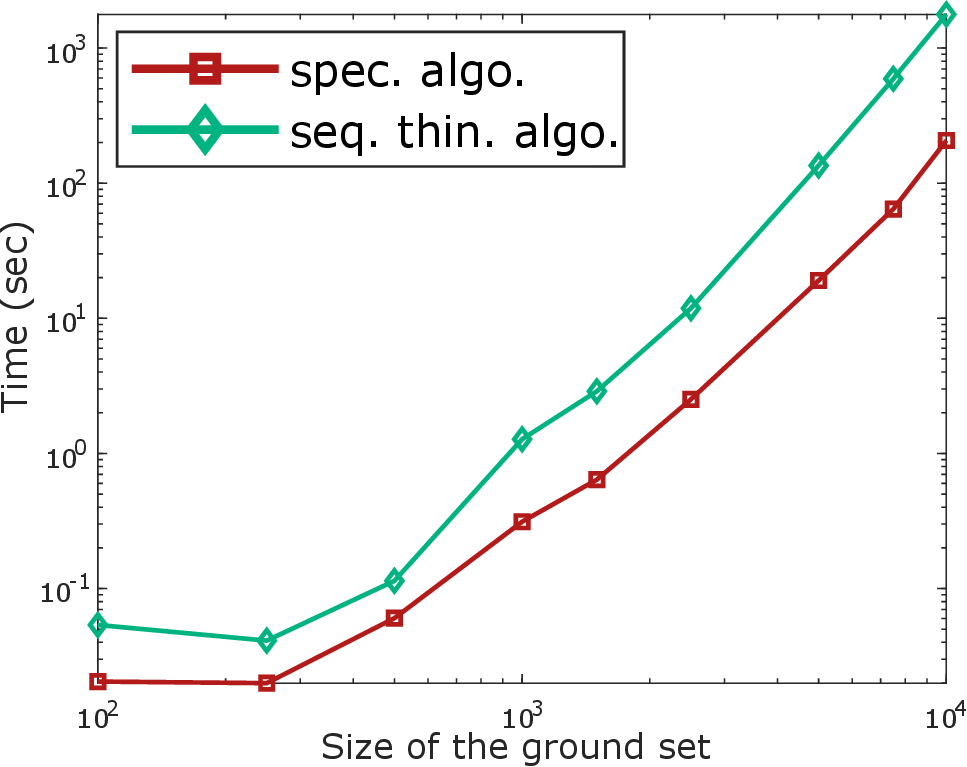}
	\hfill
	\includegraphics[scale=0.195]{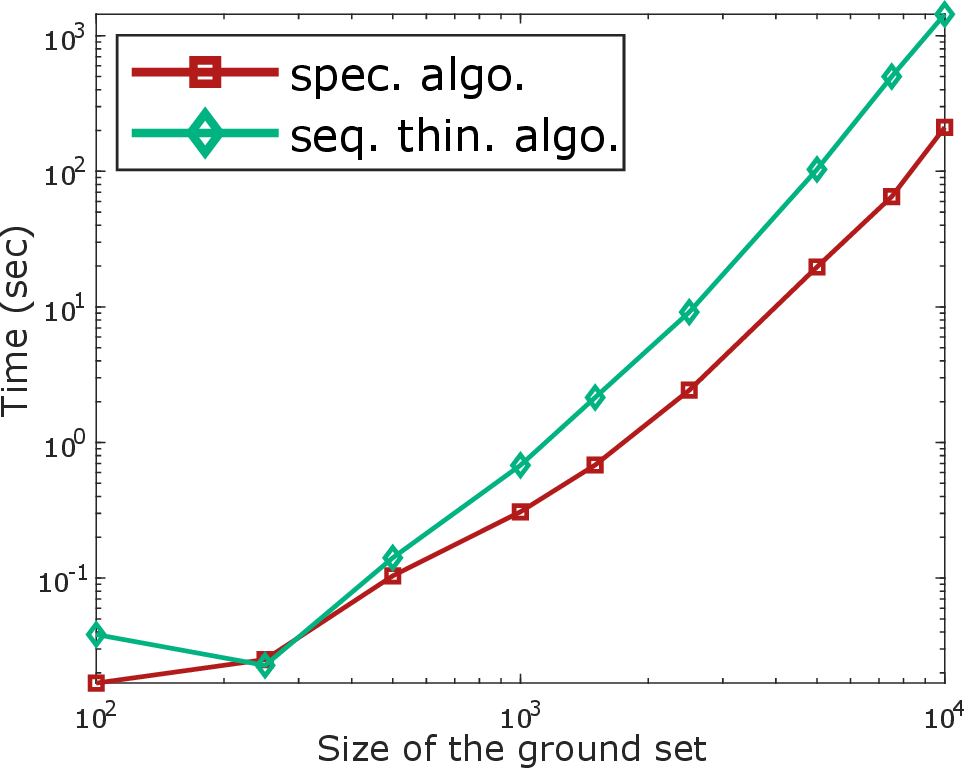}
	\hfill
	\includegraphics[scale=0.195]{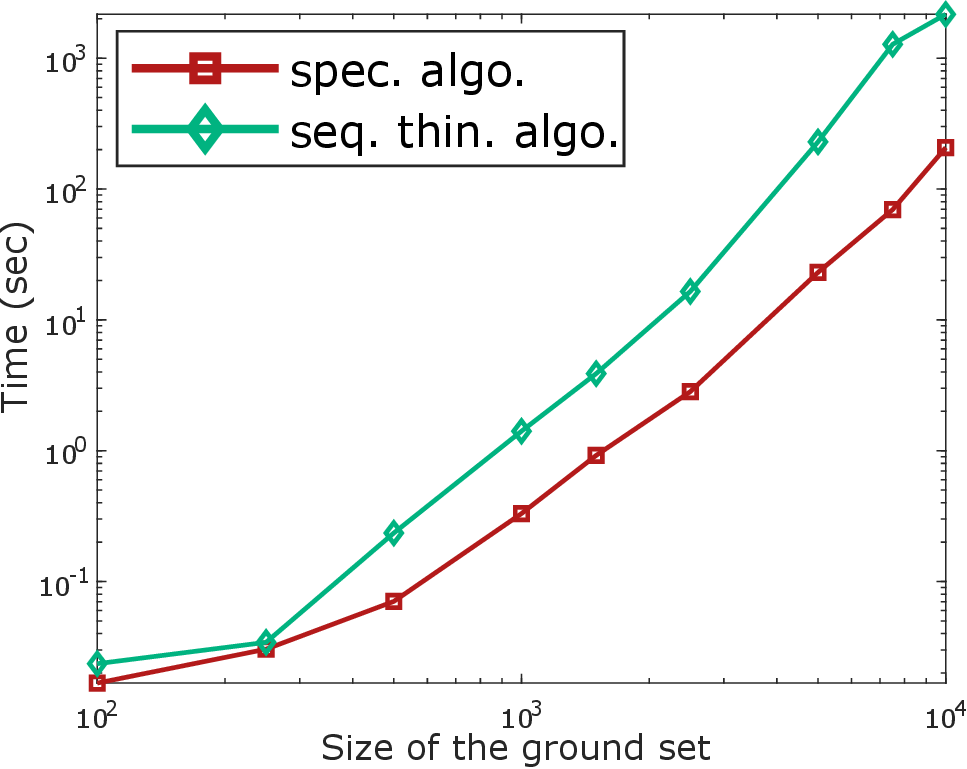}
	\hfill
	\includegraphics[scale=0.195]{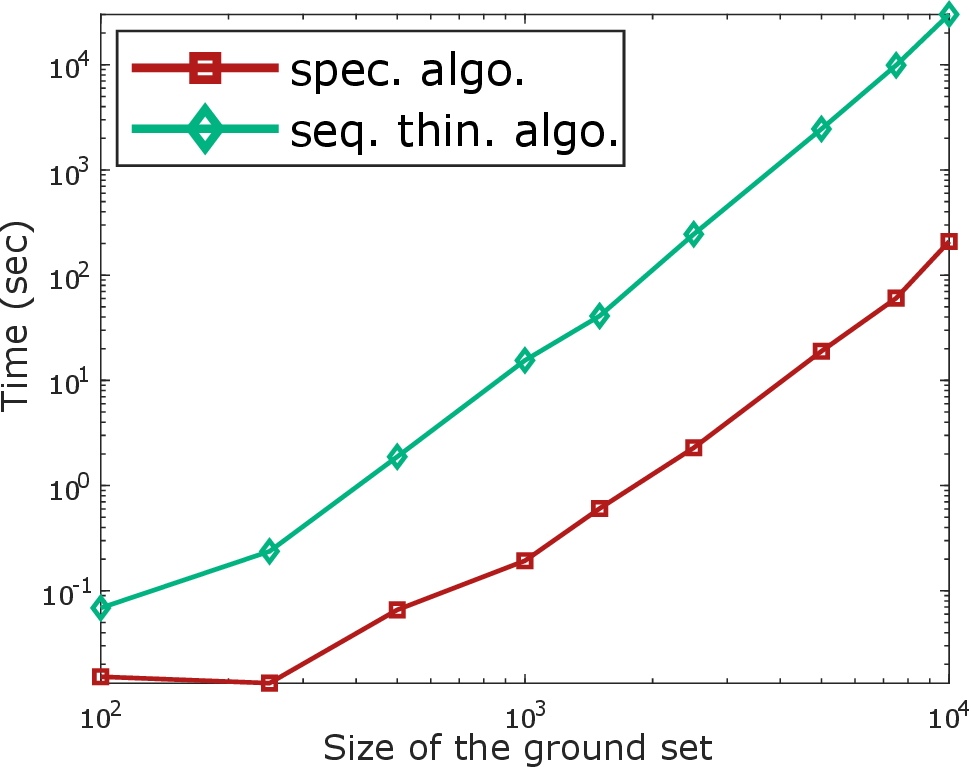}
\vspace{0.3cm}

	\includegraphics[scale=0.195]{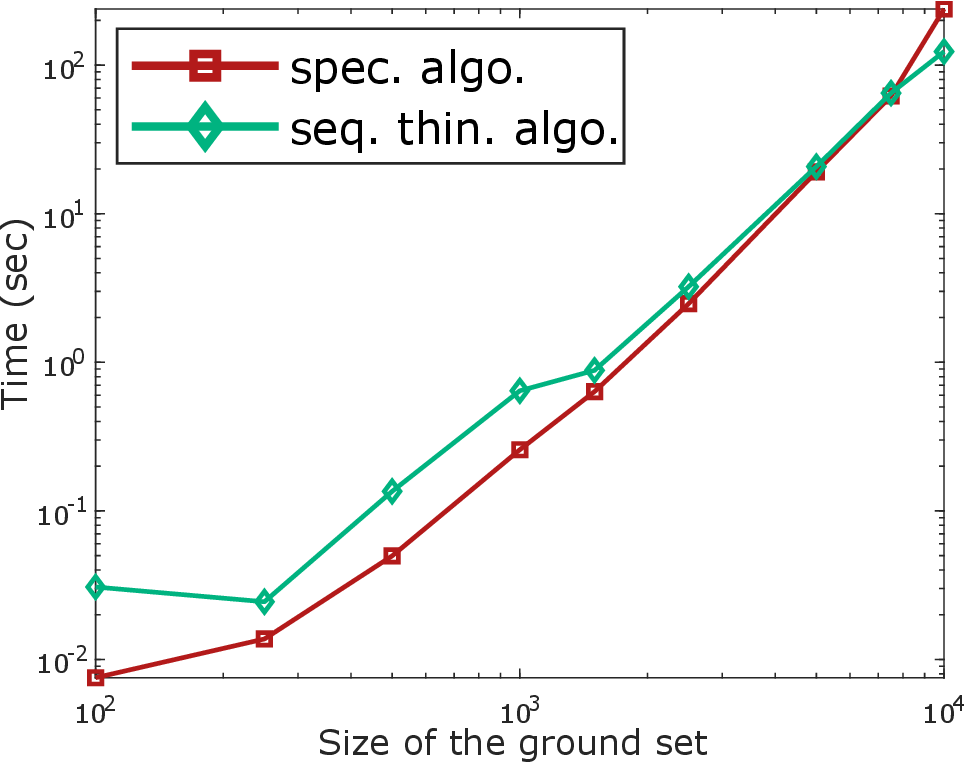} 
	\hfill
	\includegraphics[scale=0.195]{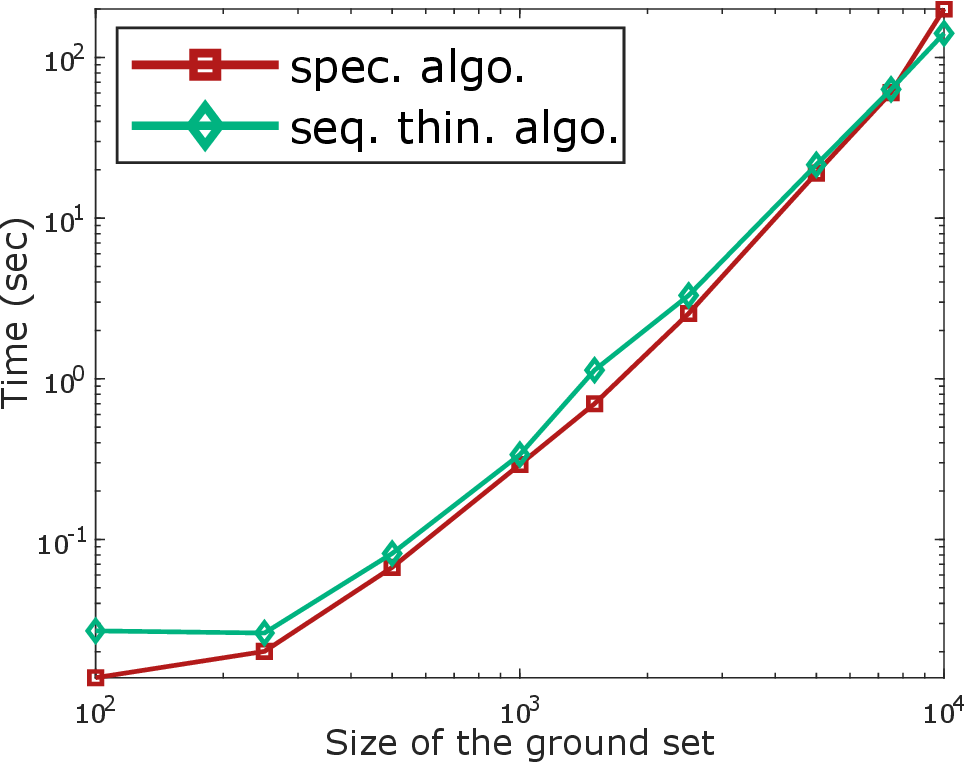}
	\hfill
	\includegraphics[scale=0.195]{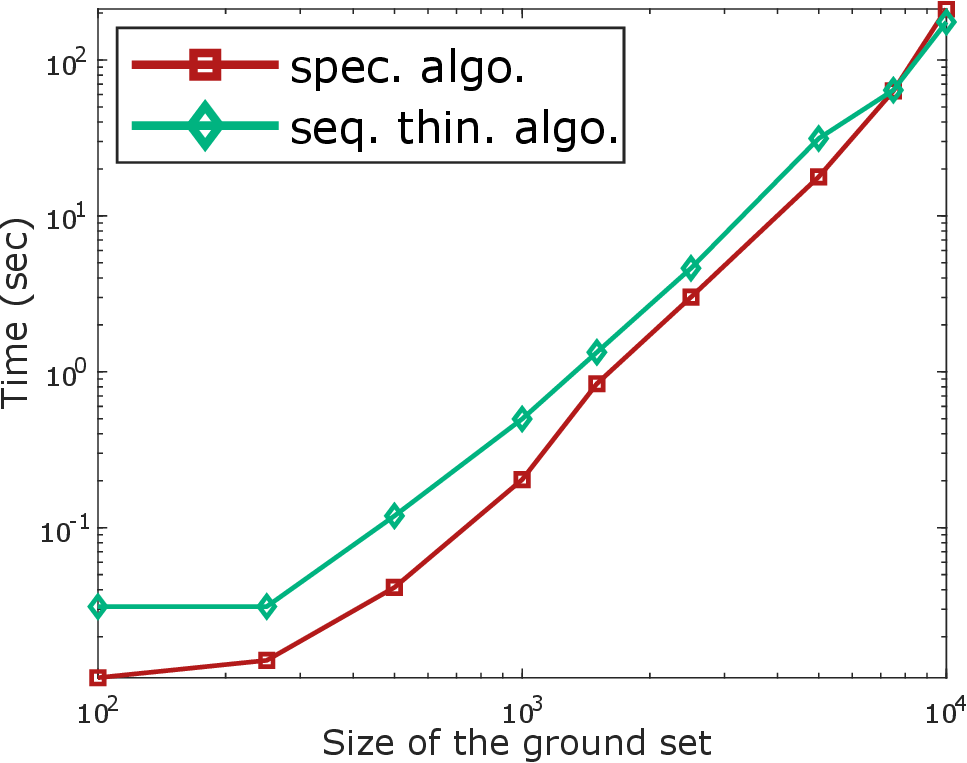}
	\hfill
	\includegraphics[scale=0.195]{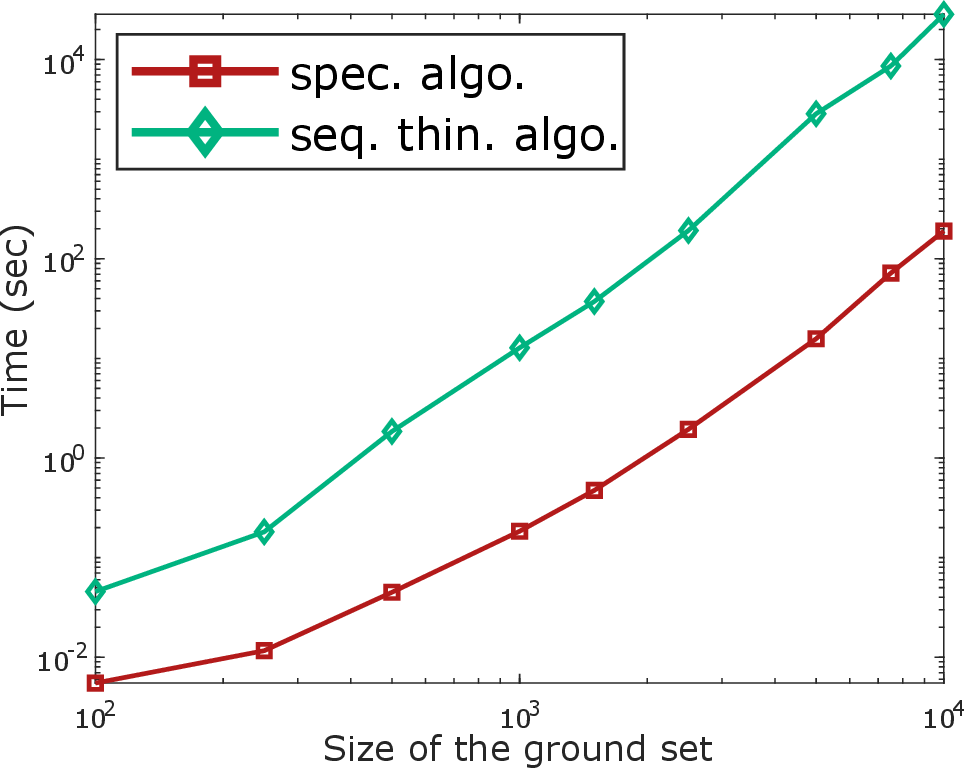}
\end{center}

\caption{Running times in log-scale of the spectral and the sequential thinning algorithms as a function of the size of the ground set $|\Y|$, using ``classic" DPP kernels. From left to right: a random kernel, a Ginibre-like kernel, a patch-based kernel and a projection kernel. On the first row, the expectation of the number of sampled points is set to $4 \%$ of $|\Y|$ and on the second row, $\mathbb{E}(|Y|)$ is constant, equal to $20$.}
\label{fig:running_times_2algo_growing_groundset}
\end{figure*}

From now on, we restrict the comparison to the spectral and the sequential thinning algorithms (Algorithms \ref{alg_spectral_simulation_dpp} and \ref{alg:sequential_thinning_dpp_bernoulli}). We present in Figure \ref{fig:running_times_2algo_growing_groundset} the running times of these algorithms as a function of the size of $|\Y|$ in various situations. The first row shows the running times when the expectation of the number of sampled point $\mathbb{E}(|Y|)$ is equal to $4\%$ of the size of $\Y$: it increases as the total number of points increases. In this case, we can see that whatever the chosen kernel, the spectral algorithm is faster as the complexity of sequential part of Algorithm \ref{alg:sequential_thinning_dpp_bernoulli} depends on the size $|X|$ that also grows. On the second row, as $|\Y|$ grows, $\mathbb{E}(|Y|)$ is fixed to $20$. Except for the right-hand-side kernel, we are in the configuration where $|X|$ stays proportional to $|Y|$, then the Bernoulli step of Algorithm \ref{alg:sequential_thinning_dpp_bernoulli} is very efficient and this sequential thinning algorithm becomes competitive with the spectral algorithm. %It is faster when using a random kernel and seems equivalent when using a Ginibre kernel or our third example of DPP kernel gathering similarities between the patches of an image.
For these general kernels, we observe that the sequential thinning algorithm can be as fast as the spectral algorithm, and even faster, when the expected cardinality of the sample is small compared to the size of the ground set. The question is: when and up to which expected cardinality is Algorithm \ref{alg:sequential_thinning_dpp_bernoulli} faster?

\begin{figure*}
\begin{center}

	\includegraphics[scale=0.195]{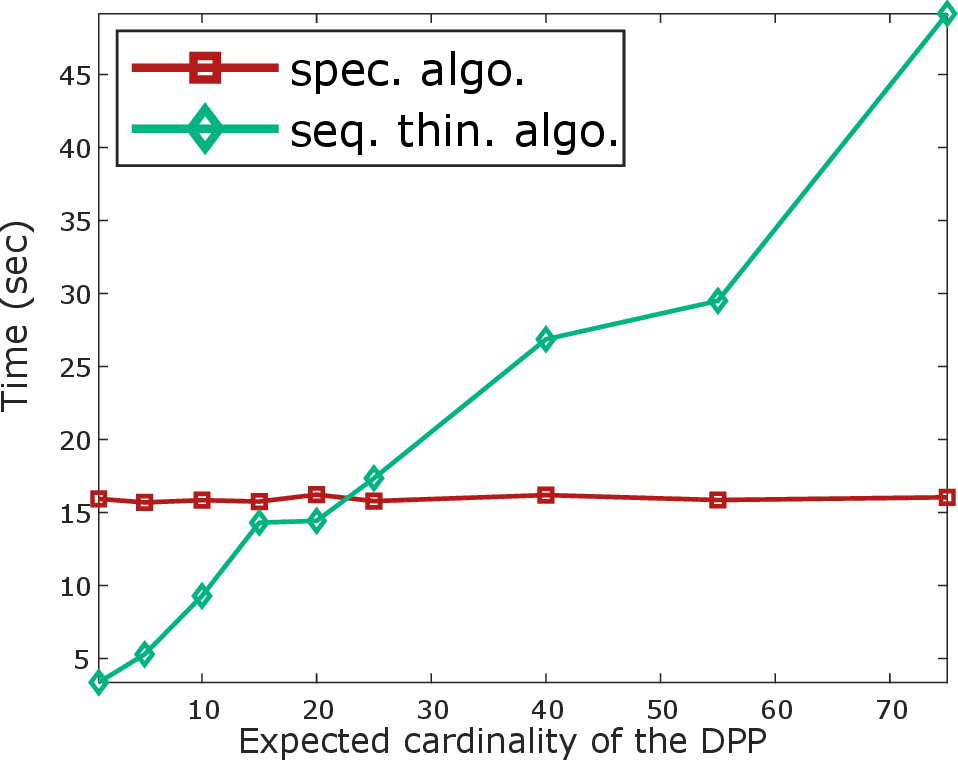}
	\hfill
	\includegraphics[scale=0.195]{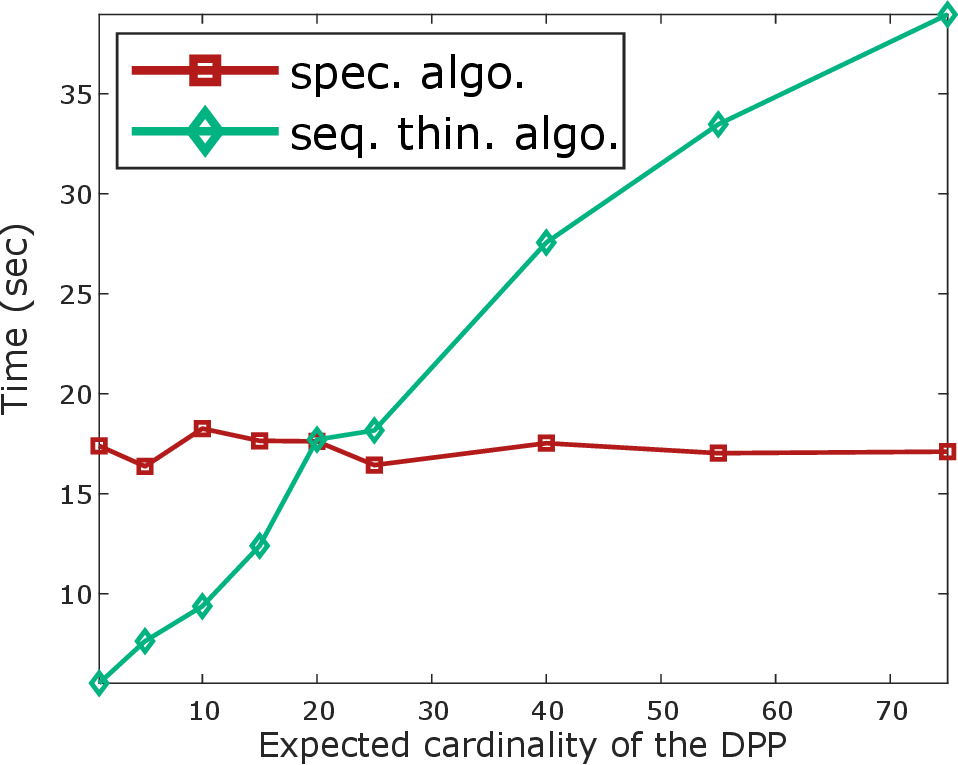}
	\hfill
	\includegraphics[scale=0.195]{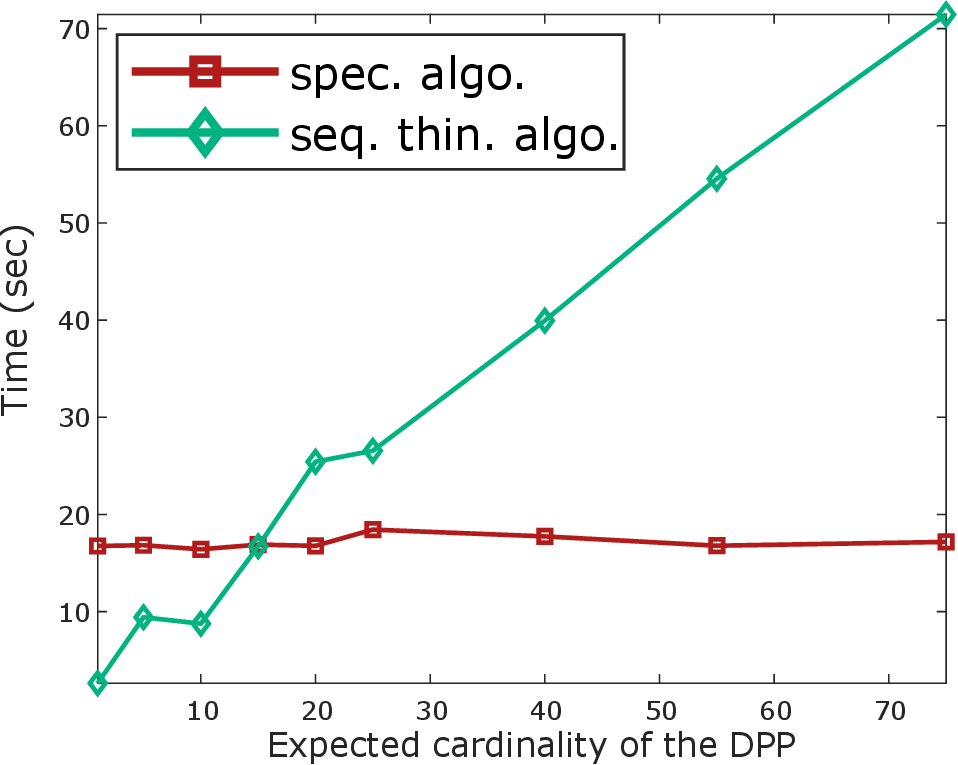}
	\hfill
	\includegraphics[scale=0.195]{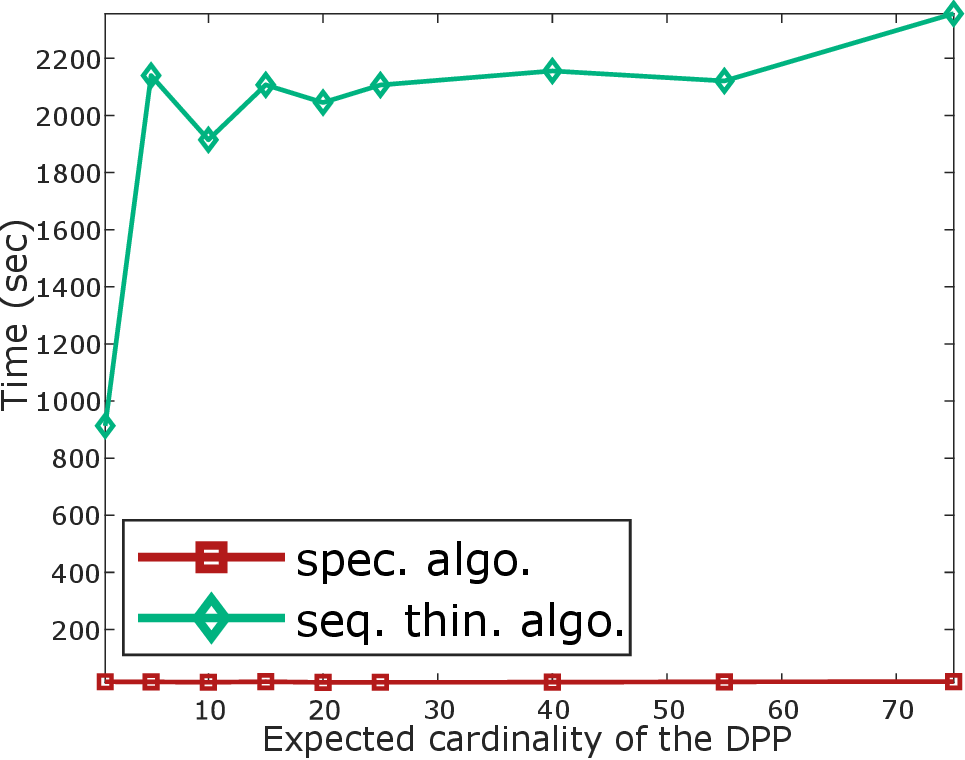}
\end{center}
\caption{Running times of the spectral and sequential thinning algorithms in function of the expected cardinality of the process. From left to right, using a random kernel, a Ginibre-like kernel, the patch-based kernel and a projection kernel. The size of the ground set is fixed to $5000$ in all examples.}
\label{fig:running_times_2algo_growing_cardinality}
\end{figure*}

Figure \ref{fig:running_times_2algo_growing_cardinality} displays the running times of both algorithms in function of the expected cardinality of the sample when the size of the ground set is constant, equal to 5000 points. Notice that, concerning the three left-hand-side general kernels with no eigenvalue equal to one, the sequential thinning algorithm is faster under a certain expected number of points -which depends on the kernel. For instance, when the kernel is randomly defined and the range of desired points to sample is below 25, it is relevant to use this algorithm. To conclude, when the eigenvalues of the kernel are below one, Algorithm \ref{alg:sequential_thinning_dpp_bernoulli} seems relevant for large data sets but small samples. This case is quite common, for instance to summarize a text, to work only with representative points in clusters or to denoise an image with a patch-based method.

The projection kernel (when the eigenvalues of $K$ are either $0$ or $1$) is, as expected, a complicated case. Figure \ref{fig:running_times_2algo_growing_groundset} (bottom, right) shows that our algorithm is not competitive when using this kernel. Indeed, the cardinality of the dominating Bernoulli process $X$ can be very large. In this case, the bound in Equation \eqref{eq:cardX_proportional_to_cardY} isn't valid (and even tends to infinity) as $\lambda_{\max} = 1$, and we necessarily reach the degenerated case when, after some index $k$, all the Bernoulli probabilities $q_l, l \geq k,$ are equal to 1. Then the second part of the sequential thinning algorithm -the sequential sampling part- is done on a larger set which significantly increases the running time of our algorithm. Figure \ref{fig:running_times_2algo_growing_cardinality} confirms this observation as in that configuration, the sequential thinning algorithm is never the fastest.

\begin{figure*}
\begin{minipage}{.31\textwidth}
	\centering
	\includegraphics[width=1\textwidth]{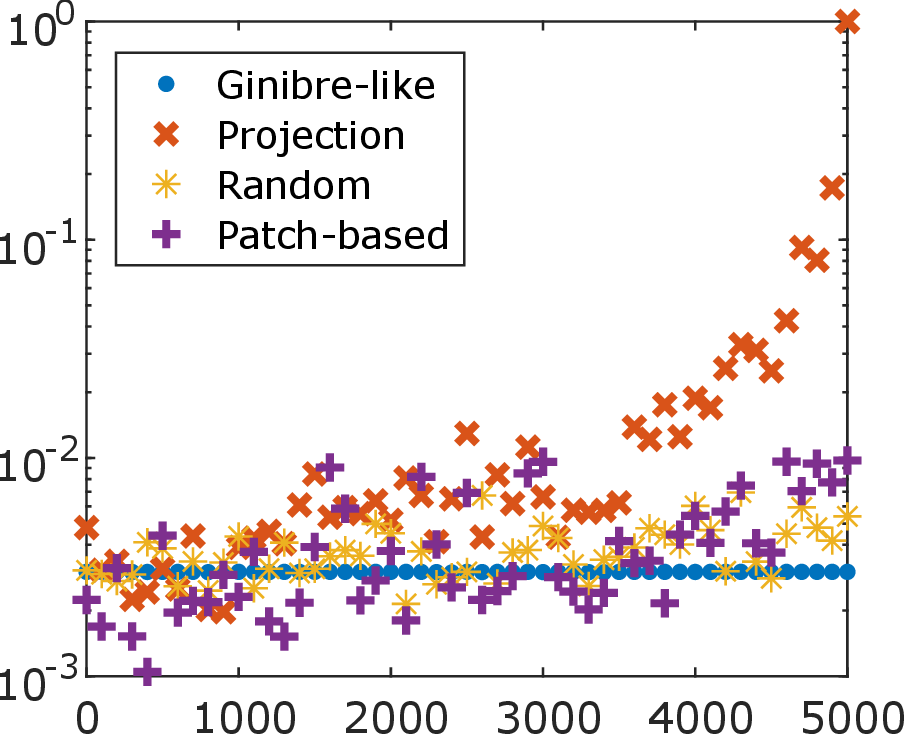}
	\subcaption{$\mathbb{E}(|Y|) = 15$}
\end{minipage} 
\hfill
\begin{minipage}{.31\textwidth}
	\centering
	\includegraphics[width=1\textwidth]{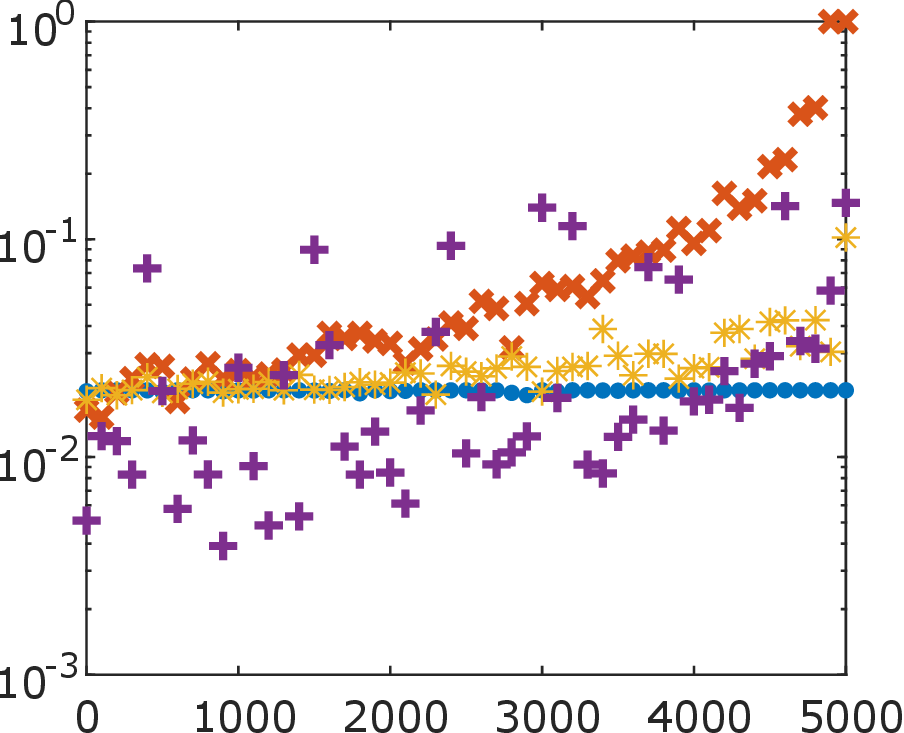}
	\subcaption{$\mathbb{E}(|Y|) = 100$}
\end{minipage} 
\hfill
\begin{minipage}{.31\textwidth}
	\centering
	\includegraphics[width=1\textwidth]{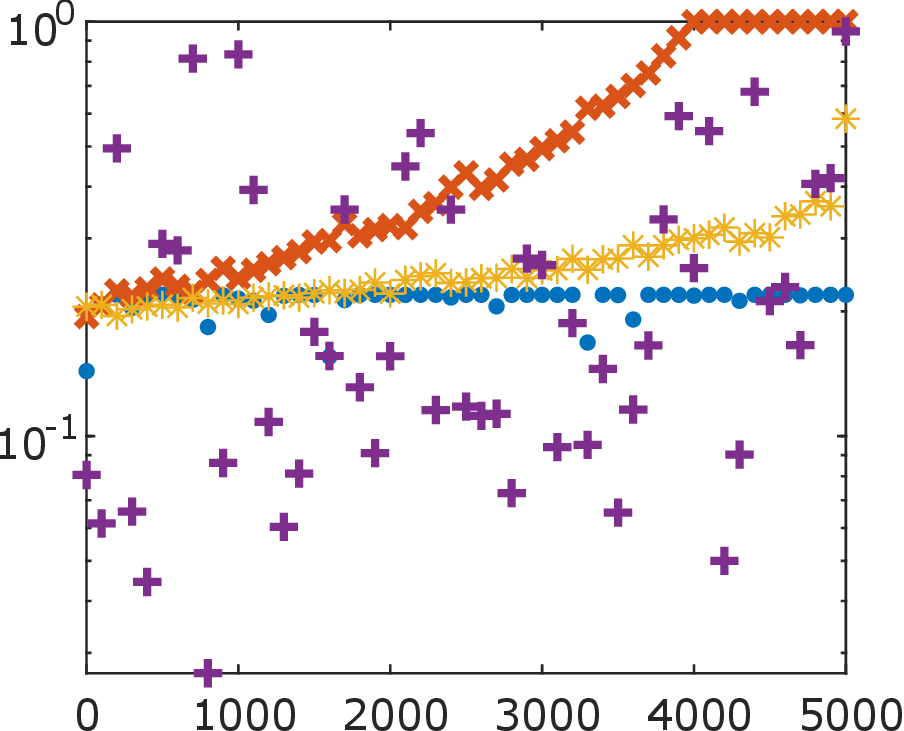}
	\subcaption{$\mathbb{E}(|Y|) = 1000$}
\end{minipage} 
\caption{Behavior of the Bernoulli probabilities $q_k, \, k \in \{1,\dots, N \}$, for the kernels presented in Section \ref{subsec_expe_kernels}, considering a ground set of $N= 5000$ elements and varying the expected cardinality of the DPP, $\mathbb{E}(|Y|) = 15,100,1000$.}
\label{fig:q_k}
\end{figure*}

Figure \ref{fig:q_k} illustrates how efficient the first step of Algorithm \ref{alg:sequential_thinning_dpp_bernoulli} can be to reduce the size of the initial set $\Y$. It displays Bernoulli probabilities $q_k, k \in \{1,\dots, N\}$ (Equation \ref{eq:qk_proba_bernoulli_dominating_dpp}) associated to the previous kernels, for different expected cardinality $\mathbb{E}(|Y|)$. Observe that the probabilities are overall higher for a projection kernel.  For such a kernel, we know that they necessarily reach the value 1, at the latest from the item $k = \mathbb{E}(|Y|)$. Indeed projection DPPs have a fixed cardinality (equal to $\mathbb{E}(|Y|)$) and $q_k$ computes the probability to select the item $k$ given that no other item has been selected yet. Notice that in general, considering the other kernels, the degenerated value $q_k=1$ is rarely reached, even though in our experiments, the Bernoulli probabilities associated to the patch kernel (c) are sometimes close to one, when the expected size of the sample is $\mathbb{E}(|Y|) = 1000$. On the opposite, the Bernoulli probabilities associated to the Ginibre-like kernel remain rather close to a uniform distribution.%, which has a direct effect on the size of $|X|$ the dominating process, thus on the running time of the algorithm.

In order to understand more precisely to what extent high eigenvalues penalize the efficiency of the sequential thinning algorithm (Algorithm \ref{alg:sequential_thinning_dpp_bernoulli}), Figure \ref{fig:eigenvalues} compares its running times with that of the spectral algorithm (Algorithm \ref{alg_spectral_simulation_dpp}) in function of the eigenvalues of the kernel $K$. For these experiments, we consider a ground set of size $|\Y| = 5000$ items and an expected cardinality equal to $15$. In the first case (a), the eigenvalues are either equal to 0 or to $\lambda_{\text{max}}$, whith $m$ non-zero eigenvalues so that  $m \lambda_{\text{max}} = 15$. It shows that above a certain $\lambda_{\text{max}}$ $(\simeq 0.65)$, the sequential thinning algorithm is not the fastest anymore. In particular, when $\lambda_{\text{max}} = 1$, the running time takes off. In the second case (b), the eigenvalues $(\lambda_k)$ are randomly distributed between 0 and $\lambda_{\text{max}}$ so that $\sum_k \lambda_k = 15$. In practice, $(N-1)$ eigenvalues are exponentially distributed, with expectation $\frac{15- \lambda_{\text{max}}}{N-1}$, and the last eigenvalue is set to $\lambda_{\text{max}}$. In this case, the sequential thinning algorithm remains faster than the spectral algorithm, even with high values of $\lambda_{\text{max}}$, except when $\lambda_{\text{max}} = 1$. This can be explained by the fact that, by construction of this kernel, most of the eigenvalues are very small. The average size of the Bernoulli process generated (light grey, right axes) also illustrates the influence of the eigenvalues.

\begin{figure*}
\begin{minipage}{.45\textwidth}
	\centering
	\includegraphics[width=1\textwidth]{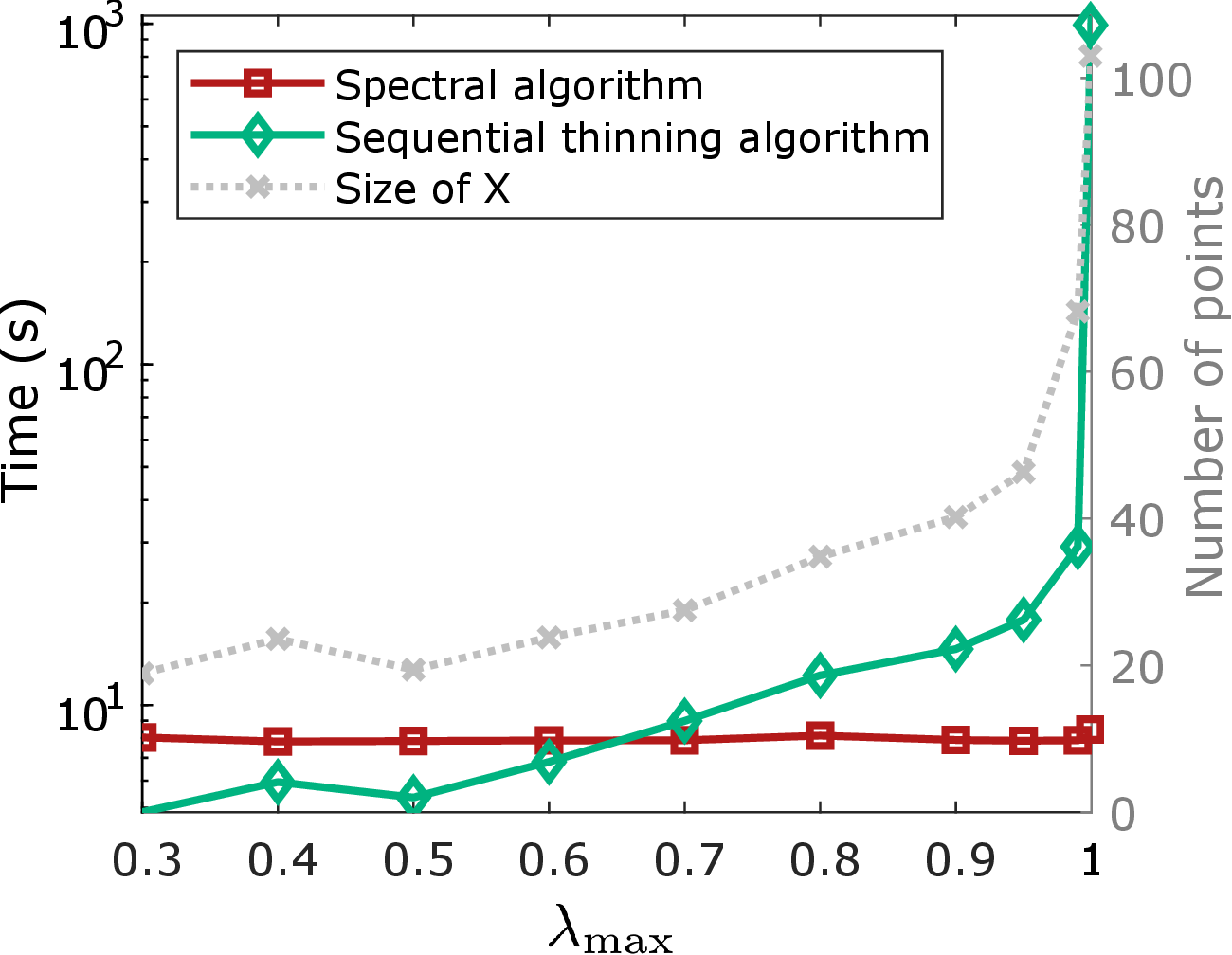}
	\subcaption{$m$ eigenvalues equal to $\lambda_{\text{max}}$ and $N-m$ zero eigenvalues.}
\end{minipage} 
\hfill
\begin{minipage}{.45\textwidth}
	\centering
	\includegraphics[width=1\textwidth]{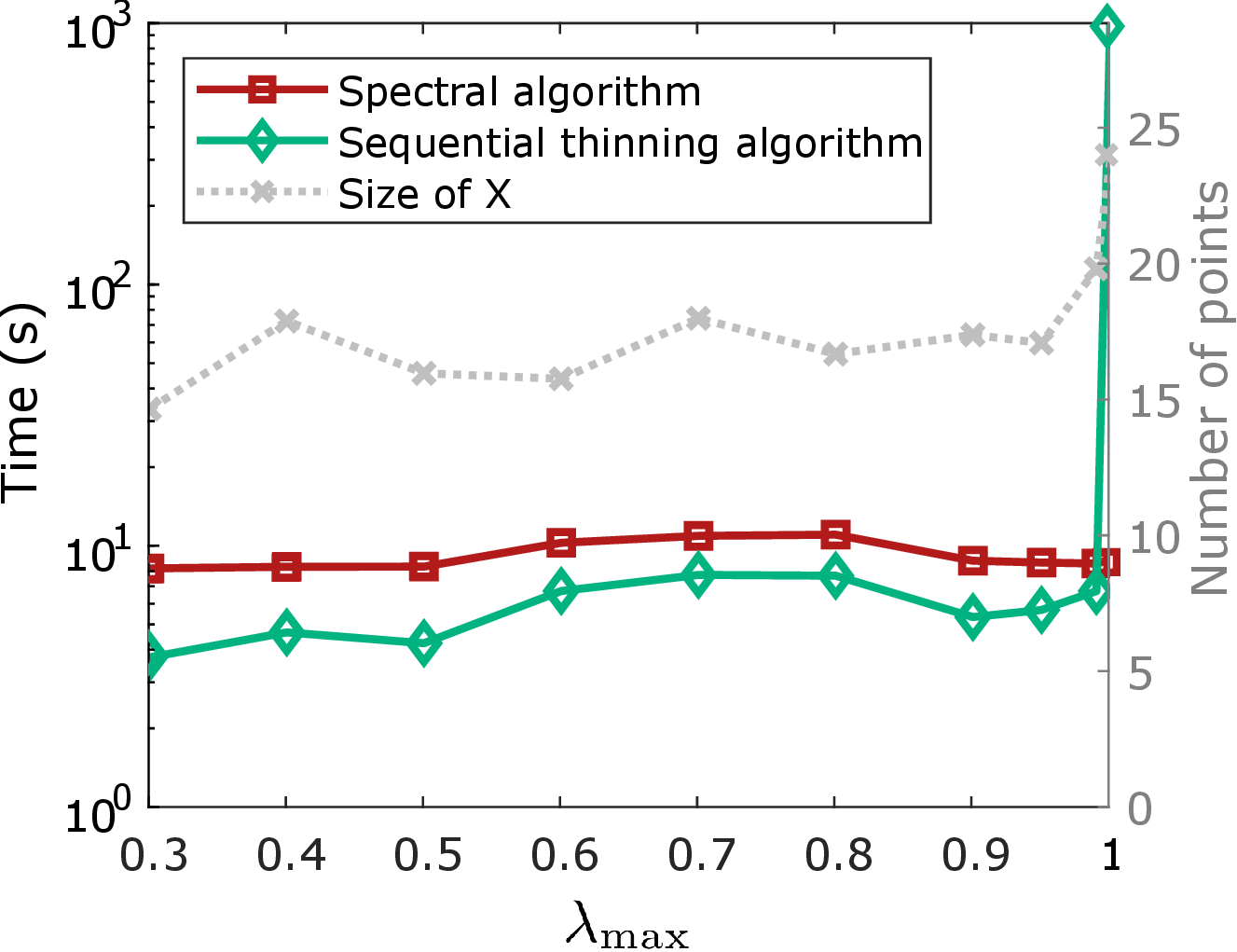}
	\subcaption{$N$ random eigenvalues between 0 and $\lambda_{\text{max}}$.}
\end{minipage} 
\caption{Running times of the spectral and sequential thinning algorithms (Algorithm \ref{alg_spectral_simulation_dpp} and \ref{alg:sequential_thinning_dpp_bernoulli}) in function of $\lambda_{\text{max}}$. The size of the Bernoulli process $X$ is also displayed in light grey (right axis). Here, $|\Y| = 5000$ and $\mathbb{E}(|Y|) = 15$.}
\label{fig:eigenvalues}
\end{figure*}

\begin{table}[h]
\centering

%\begin{scriptsize}
\begin{tabular}{llll}
\toprule 
Algorithms & Steps &  \multicolumn{2}{c}{Expected cardinality} \\ 
 
 & & $4 \%$ of $|\Y|$ & Constant (20) \\
 \midrule
Sequential & Matrix inversion & $ 74.25\%$ & $72.71 \%$ \\ 
 
  & Cholesky computation & $ 22.96\%$ & $17.82 \%$ \\ 
\midrule
Spectral & Eigendecomposition & $ 83.34\%$ & $94.24 \%$\\ 

  & Sequential sampling & $ 14.77\%$& $4.95 \%$ \\ 
\midrule 
Sequential thinning & Preprocess to define $q$ & $ 10.07\%$           & $13.43 \%$ \\ 
 
  & Sequential sampling & $ 89.39\%$  & $86.53 \%$ \\ 
\bottomrule
\end{tabular} 
%\end{scriptsize}
\caption{Detailed running times of the sequential, spectral and sequential thinning algorithms for varying ground sets $\Y$ with $|\Y| \in [100,5000]$ using a patch-based kernel.}
\label{tab:detailed_running_times}
\end{table}

Table \ref{tab:detailed_running_times} presents the individual weight of the main steps of the three algorithms. Concerning the sequential algorithm, logically, the matrix inversion is the heaviest part taking $74.25\%$ of the global running time. These proportions remain the same when the expected number of points $n$ grows. The main operation of the spectral algorithm is by far the eigendecomposition of the matrix $K$, counting for $83\%$ of the global running time, when the expectation of the number of points to sample evolves with the size of $\Y$. Finally, the sequential sampling is the heaviest step of the sequential thinning algorithm. We have already mentioned that the thinning is very fast and that it produces a point process with a cardinality as close as possible to the final DPP. When the expected cardinality is low, the number of selected points by the thinning process is low too, so the sequential sampling part remains bounded ($86.53 \%$ when the expected cardinality $\mathbb{E}(|Y|)$ is constant). On the contrary, when $\mathbb{E}(|Y|)$ grows, the number of points selected by the dominated process rises as well so the running time of this step is growing (with a mean of $89.39\%$). As seen before, the global running time of the sequential thinning algorithm really depends on how good the domination is.

Thus, the main case when this sequential thinning algorithm (Algorithm \ref{alg:sequential_thinning_dpp_bernoulli}) fails to compete with the spectral algorithm (Algorithm \ref{alg_spectral_simulation_dpp}) is when the eigenvalues of the kernel are equal or very close to 1. This algorithm improves the sampling running times when the target size of the sample is very low (below 25 in our experiments).

In cases when multiple samples of the same DPP have to be drawn, 
the eigendecomposition of $K$ can be stored and the spectral algorithm is more efficient than ours.
Indeed, in our case the computation of the Bernoulli probabilities can also be saved but the sequential sampling is the heaviest task and needs to be done for each sample.

\subsection{Sampling the patches of an image}
\label{subsec_sampling_patches}

A random and diverse subselection of the set of patches of an image can be useful for numerous image processing applications. A first obvious one is image compression. Indeed, it is possible to obtain a good reconstruction of the image from a very small portion of its patches. It is sometimes necessary to keep only the most informative patches of the image, if possible a small amount, and reconstruct the image, store it, only using these few patches. Moreover, most of patch-based algorithms could use such a subselection of patches to improve or at least speed up its procedures, e.g. for denoising \cite{Buades_NLmeans_2005}.
To do this, the selected patches must be representative of the patches diversity and this is what DPPs offer. Launay and Leclaire \cite{LaunayLeclaire_textoDPP_2019} explore this strategy to speed up a texture synthesis algorithm.

Given an image $u$ and a set $\mathcal{P}$ of 10 000 randomly picked patches of $u$, we compare here the selection strategies using either a DPP or a random uniform selection.
Let us recall the patch-based kernel~\ref{item_patch_kernel} defined as the $L$-ensemble associated with
$$\forall P_1, P_2 \in \mathcal{P}, \quad L(P_1,P_2) = \exp\left( -\frac{\|P_1-P_2\|_2^2}{s^2}\right),$$
that is, $L$ is a Gaussian kernel applied to the Euclidean distance between the patches of $\mathcal{P}$.
This function is commonly chosen to define a similarity measure between patches. 
It is relevant since in general the reconstruction error is computed in function of the Euclidean distance between the original image and the reconstructed image. 
%This kernel is also fast to compute and provides satisfying results.
We set the bandwidth or scale parameter $s$ to be proportional to the median of the interdistances between the patches, as advised by Aggarwal~\cite{aggarwal2016gaussiankernel} and Tremblay et al.~\cite{Tremblayetal_DPPforCoresets_2018}.

Figure~\ref{fig:reconstruction} presents several reconstructions of two images, obtained by uniform selection or by the DPP defined above, with various expected sample sizes. Notice that while we can control the exact cardinality of the uniform selections, the number of patches in the DPP selections varies as we can only control the expected cardinality during the sampling process. This figure shows how a selection from a DPP provides better reconstructions than a uniform selection, especially when the number of patches is low. Indeed, as the DPP sampling favors diverse set of patches, it is less likely to miss an essential information of the image. 
On the contrary, nothing prevents the uniform selection from selecting very similar patches.
The Pool image on the bottom of Figure \ref{fig:reconstruction}, for a cradinality equal to 5, clearly illustrates this. The number of patches in an image depends on the size of the image and is often higher than 10000 while the selection needs to be small (between 5 and 100): here the use of our sequential thinning algorithm is pertinent.

\newlength{\mylength}
\setlength{\mylength}{0.147\columnwidth}
\begin{figure}
  \begin{center}
    \begin{tabular}{cccc}
      \includegraphics[width=\mylength]{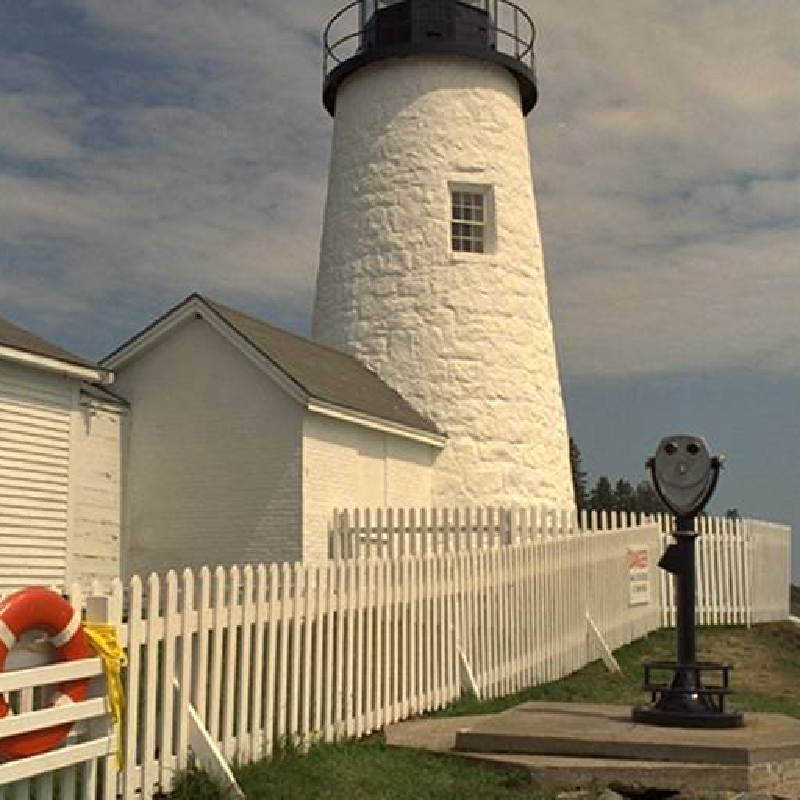} &
      \includegraphics[width=\mylength]{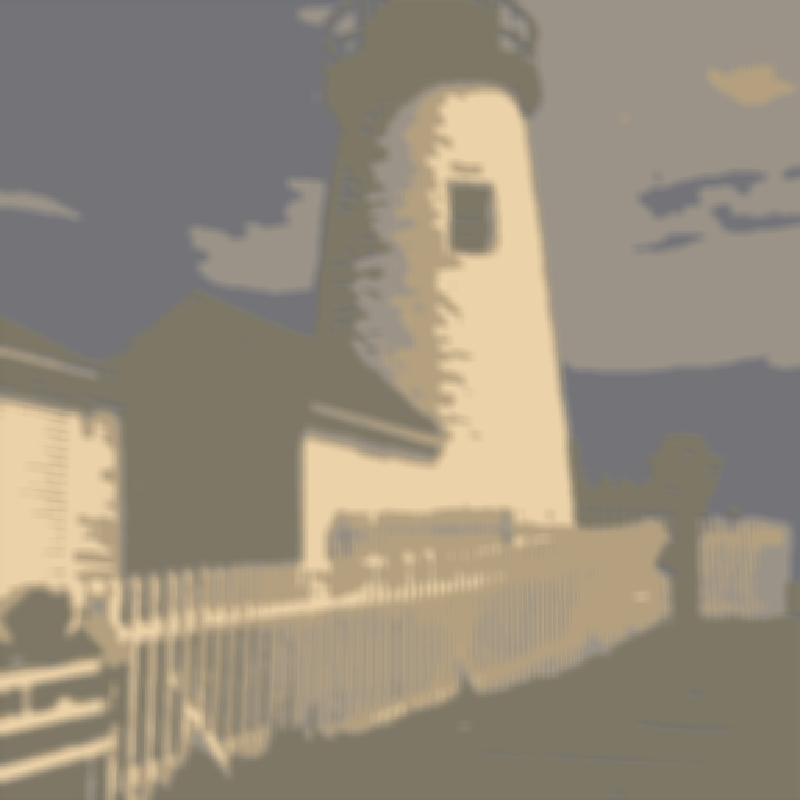} &
      \includegraphics[width=\mylength]{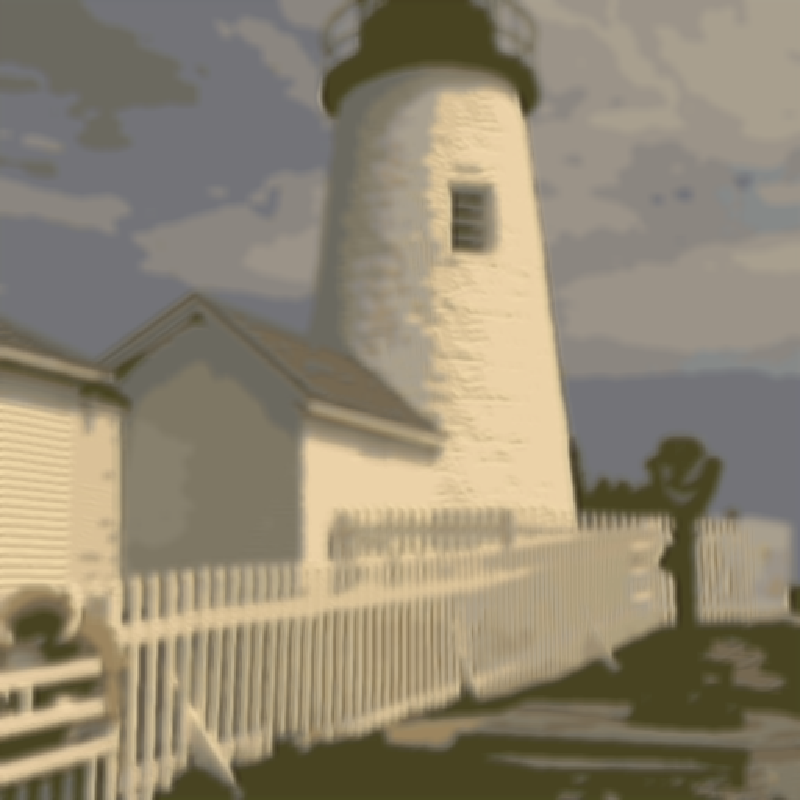} &
      \includegraphics[width=\mylength]{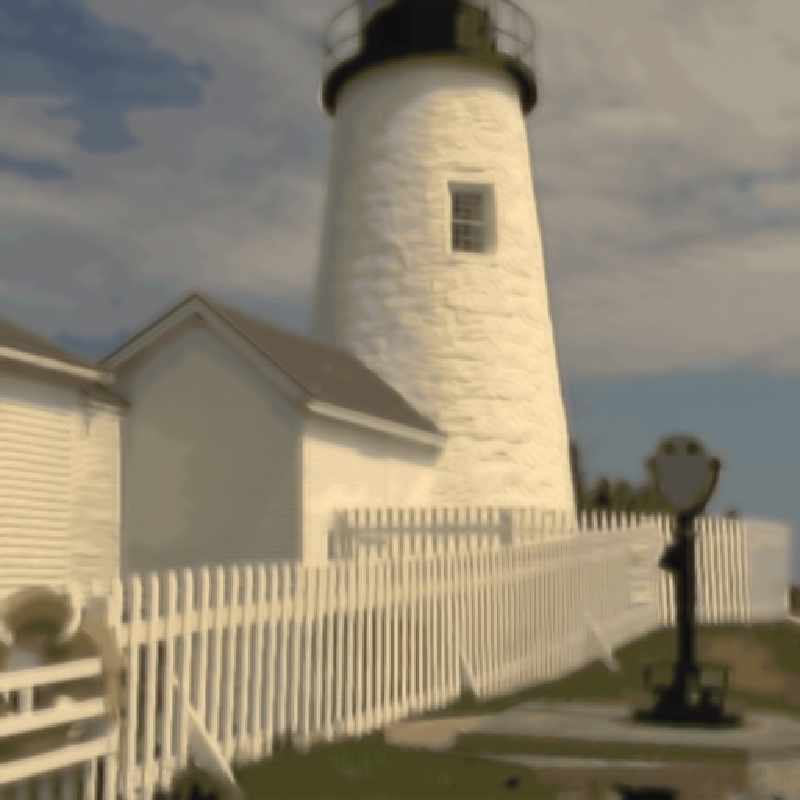} \\
      
       &
      \includegraphics[width=\mylength]{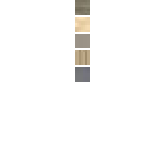} &
      \includegraphics[width=\mylength]{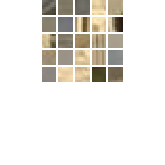} &
      \includegraphics[width=\mylength]{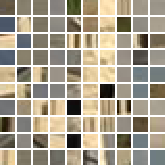}\\
      
       &
      \includegraphics[width=\mylength]{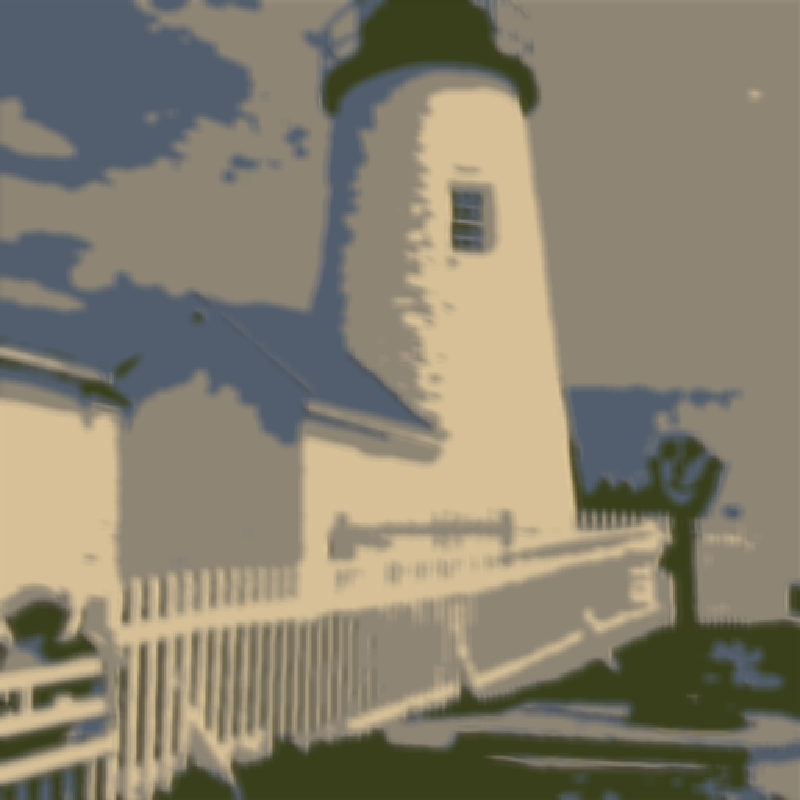} &
      \includegraphics[width=\mylength]{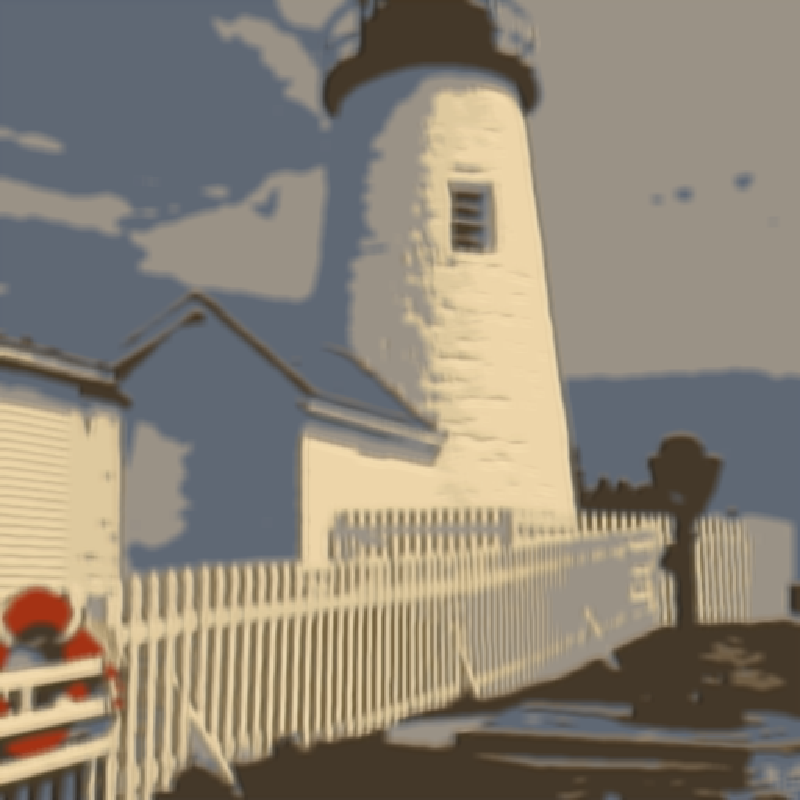} &
      \includegraphics[width=\mylength]{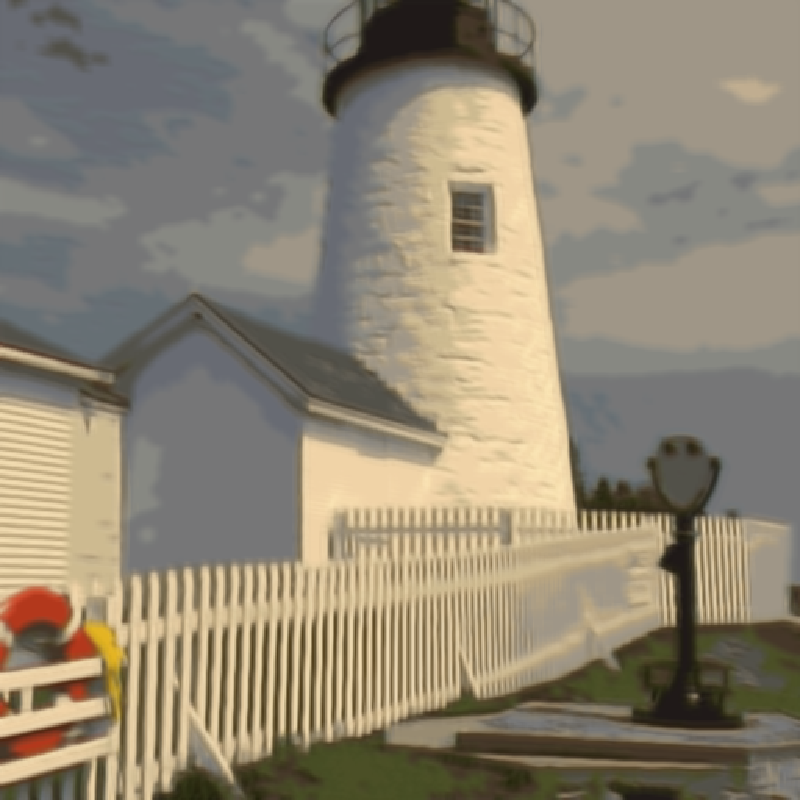} \\
      
       &
      \includegraphics[width=\mylength]{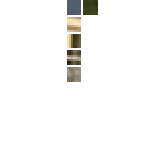} &
      \includegraphics[width=\mylength]{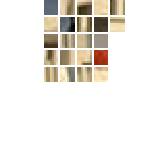} &
      \includegraphics[width=\mylength]{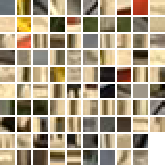} \\

      \includegraphics[width=\mylength]{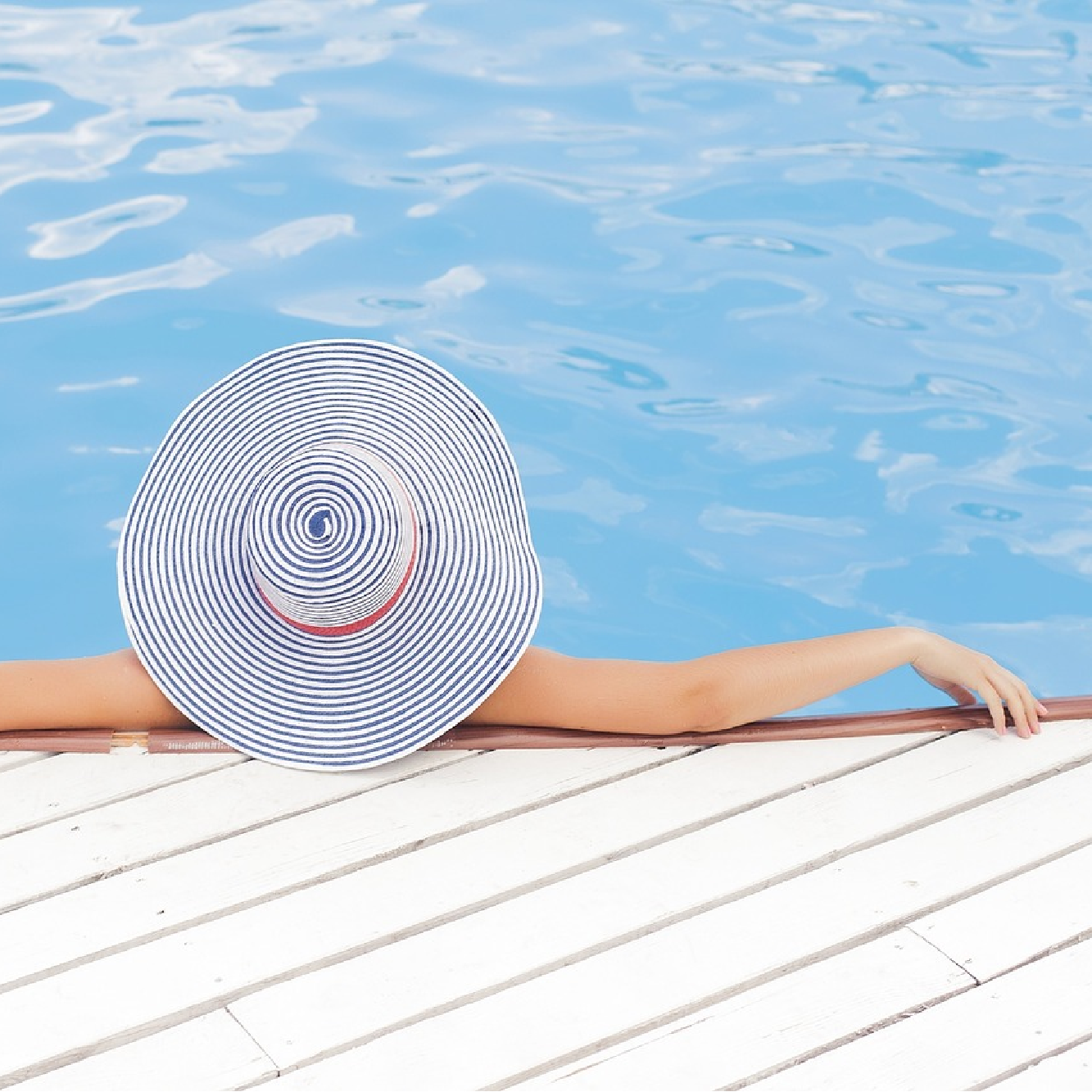} &
      \includegraphics[width=\mylength]{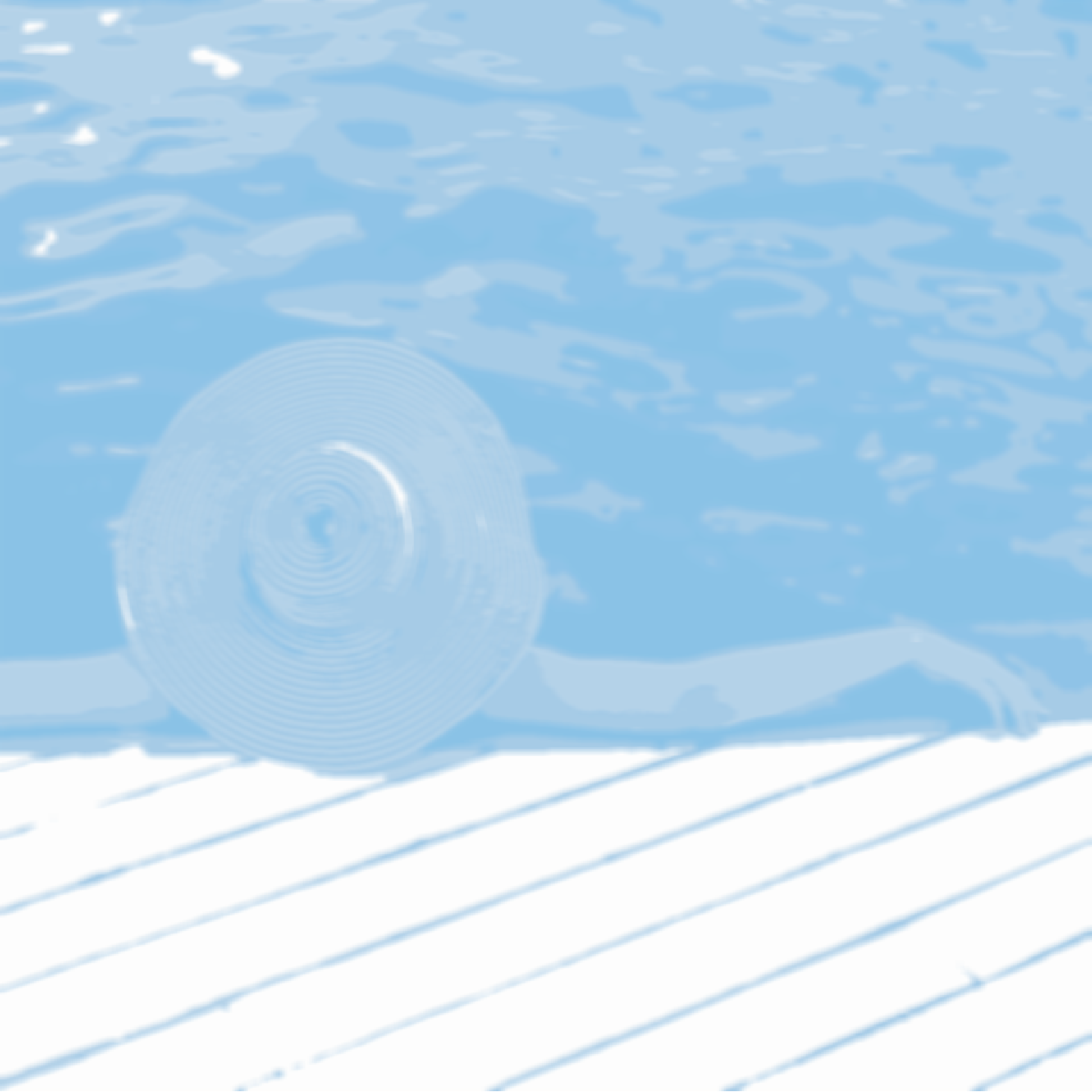} &
      \includegraphics[width=\mylength]{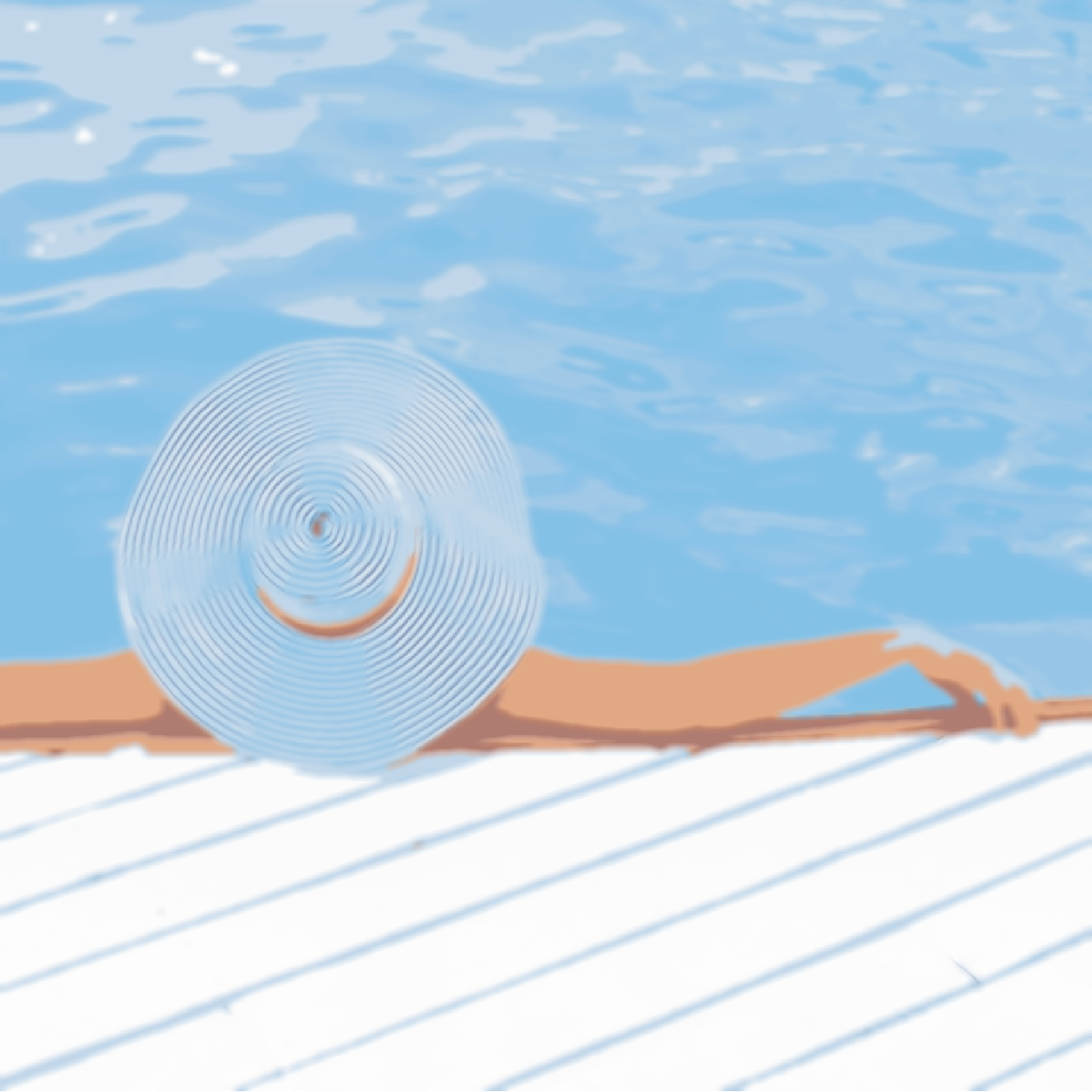} &
      \includegraphics[width=\mylength]{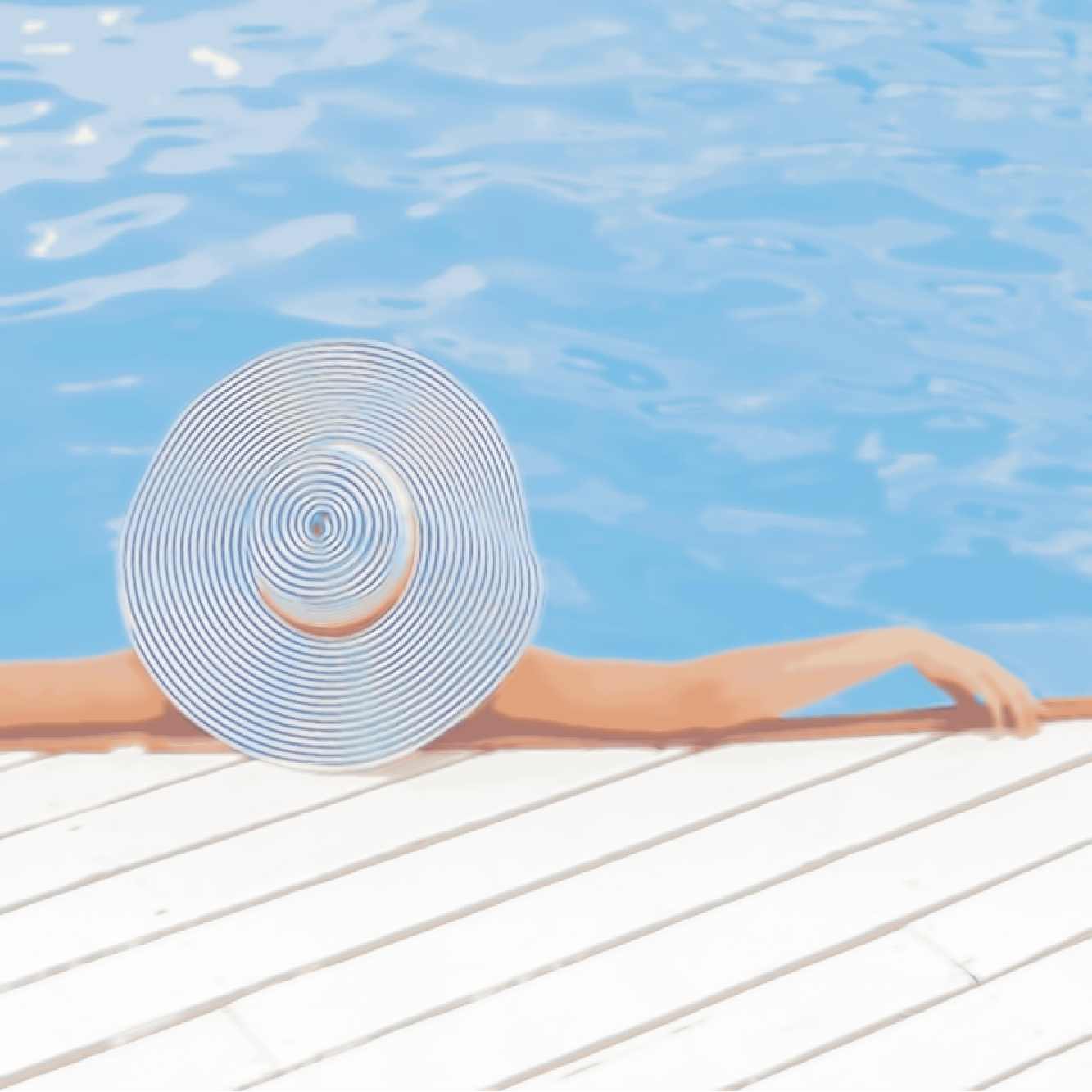} \\
      
       &
      \includegraphics[width=\mylength]{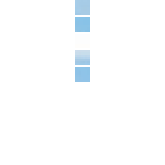} &
      \includegraphics[width=\mylength]{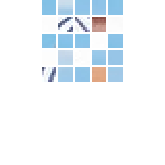} &
      \includegraphics[width=\mylength]{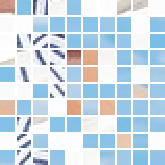}  \\ 
      
      &
      \includegraphics[width=\mylength]{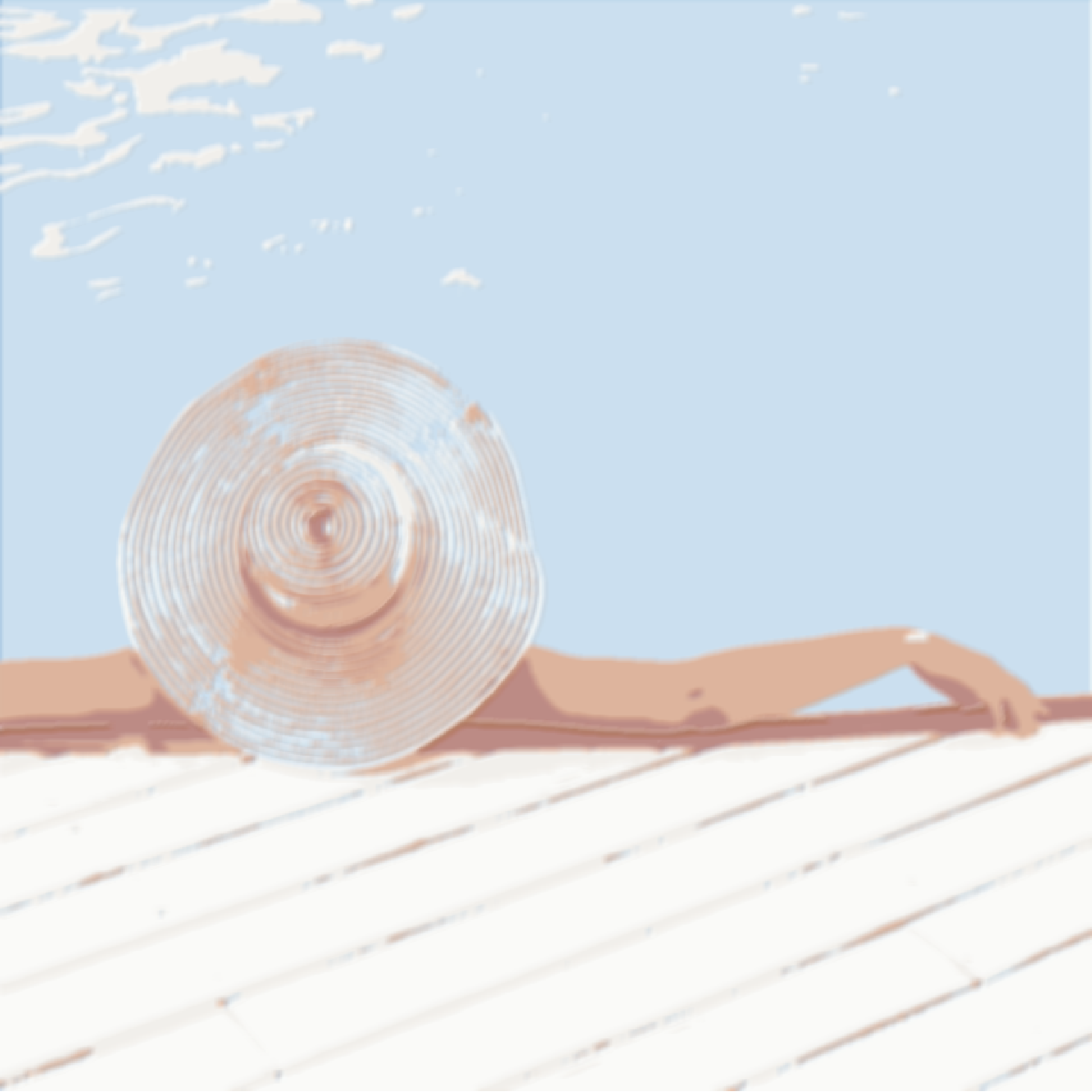} &
      \includegraphics[width=\mylength]{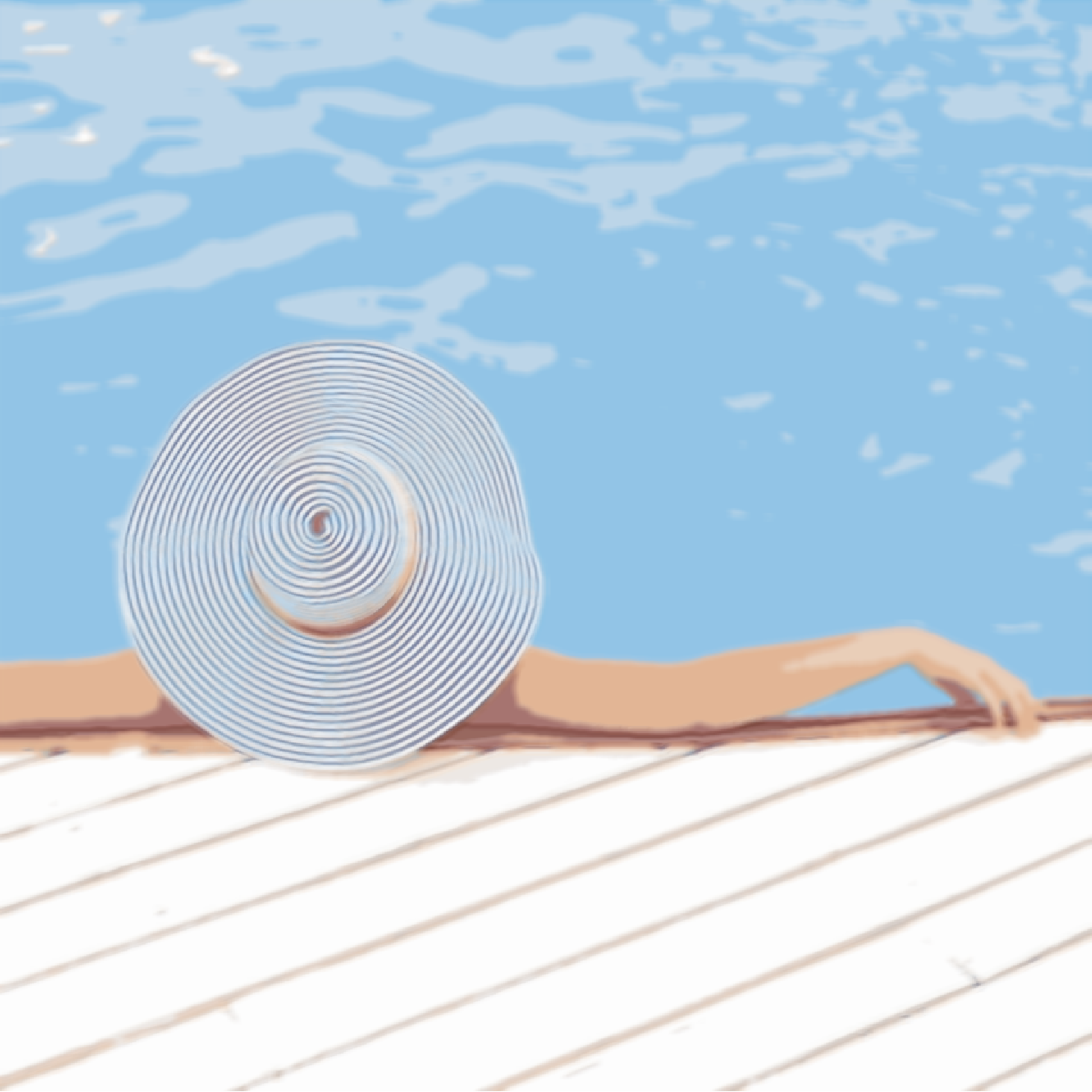} &
      \includegraphics[width=\mylength]{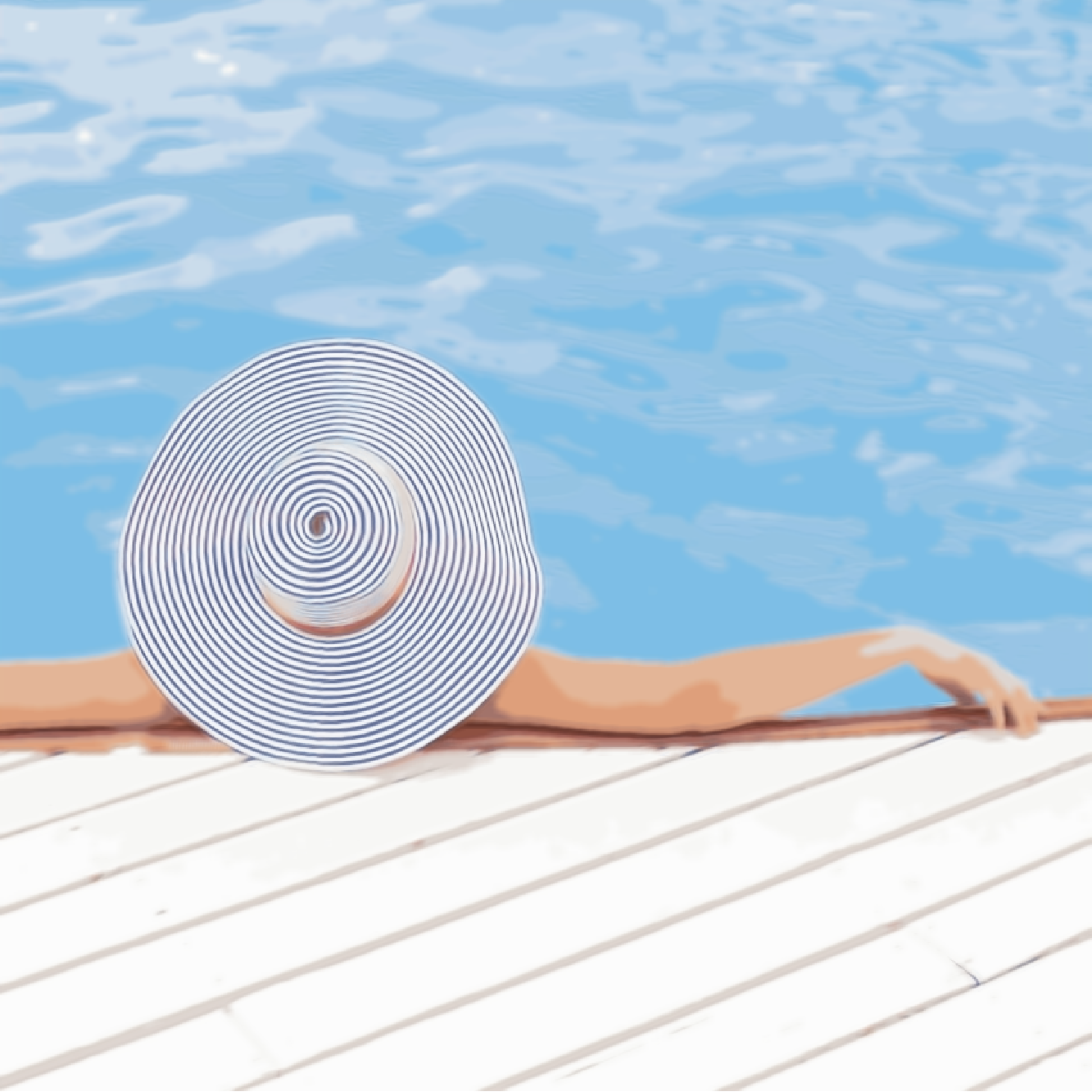}  \\
      
      &
      \includegraphics[width=\mylength]{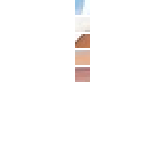} &
      \includegraphics[width=\mylength]{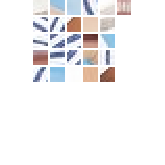} &
      \includegraphics[width=\mylength]{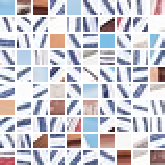}  \\
            Original & Card=$5$ & Card=$25$ & Card=$100$
    \end{tabular}
  \end{center}
  \caption{Image reconstruction: for each image, first two rows: original and reconstructions with uniformly sampled patches and below, the corresponding selected patches; second two rows: reconstructions with patches sampled according to a DPP and below, the corresponding selected patches.}
  \label{fig:reconstruction}
\end{figure}

\section{Discussion}

In this paper, we proposed a new sampling algorithm (Algorithm \ref{alg:sequential_thinning_dpp_bernoulli}) adapted to general determinantal point processes, which doesn't use the spectral decomposition of the kernel and which is exact. It proceeds in two phases. The first one samples a Bernoulli process whose distribution is adapted to the targeted DPP. It is a fast and efficient step that reduces the initial number of points of the ground set. We know that if $I-K$ is invertible, the expectation of the cardinality of the Bernoulli process is proportional to the expectation of the cardinality of the DPP. 
The second phase is a sequential sampling from the points selected in the first step. This phase is made possible by the explicit formulations of the general marginals and the pointwise conditional probabilities of any DPP from its kernel $K$. The sampling is sped up using updated Cholesky decompositions to compute the conditional probabilities. \textsc{Matlab} and Python implementations of the sequential thinning algorithm can be found online (\url{https://www.math-info.univ-paris5.fr/~claunay/exact_sampling.html}).

In terms of running times, we have detailed the cases for which this algorithm is competitive with the spectral algorithm, in particular when the size of the ground set is high and the expected cardinality of the DPP is modest. This framework is common in machine learning applications. %We took an example of application for which the use of the sequential thinning algorithm is relevant. 
Indeed, DPPs are an interesting solution to subsample a data set, initialize a segmentation algorithm or summarize an image, examples where the number of datapoints needs to be significantly reduced.

\appendix
\section{M\"{o}bius Inversion formula}
\label{app:mobius}

\begin{proposition}[M\"{o}bius inversion formula]
\label{prop:mobius_inversion_formula}
Let $V$ be a finite subset and $f$ and $g$ be two functions defined on the power set $\mathcal{P}(V)$ of subsets of $V$.
Then,
$$
\forall A\subset V,\quad f(A) = \sum_{B\subset A} (-1)^{|A\setminus B|} g(B)
\quad
\Longleftrightarrow
\quad
\forall A\subset V,\quad g(A) = \sum_{B\subset A} f(B),
$$
and
$$
\forall A\subset V,\quad f(A) = \sum_{B\supset A} (-1)^{|B\setminus A|} g(B)
\quad
\Longleftrightarrow
\quad
\forall A\subset V,\quad g(A) = \sum_{B\supset A} f(B).
$$
\end{proposition}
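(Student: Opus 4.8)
The plan is to reduce everything to one elementary fact about alternating sums over the Boolean lattice: for any finite sets $C \subset A$,
\[
\sum_{C \subset B \subset A} (-1)^{|B \setminus C|} \;=\; \Ind_{\{C = A\}}
\qquad\text{and}\qquad
\sum_{C \subset B \subset A} (-1)^{|A \setminus B|} \;=\; \Ind_{\{C = A\}}.
\]
Both follow from the binomial theorem: in the first sum put $D = B \setminus C$, so that $D$ ranges over all subsets of $A \setminus C$ and $|B\setminus C| = |D|$, giving $\sum_{D \subset A\setminus C}(-1)^{|D|} = (1-1)^{|A\setminus C|}$, which is $1$ if $A\setminus C = \emptyset$ and $0$ otherwise; for the second sum put $D = A \setminus B$ and argue identically.

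For the first equivalence, assume $g(A) = \sum_{B \subset A} f(B)$ for all $A \subset V$. Substituting and exchanging the order of summation,
\[
\sum_{B \subset A} (-1)^{|A\setminus B|} g(B)
= \sum_{B \subset A}(-1)^{|A\setminus B|}\sum_{C \subset B} f(C)
= \sum_{C \subset A} f(C)\sum_{C \subset B \subset A}(-1)^{|A\setminus B|}
= f(A),
\]
the last step by the second Boolean-interval identity. Conversely, assume $f(A) = \sum_{B\subset A}(-1)^{|A\setminus B|}g(B)$ for all $A$. Then
\[
\sum_{B\subset A} f(B)
= \sum_{B \subset A}\sum_{C\subset B}(-1)^{|B\setminus C|}g(C)
= \sum_{C\subset A} g(C)\sum_{C\subset B\subset A}(-1)^{|B\setminus C|}
= g(A),
\]
using the first Boolean-interval identity. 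This settles the first equivalence.

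For the second equivalence I would argue by complementation inside $V$. Given $f,g$ on $\mathcal{P}(V)$, set $\tilde f(S) = f(\overline S)$ and $\tilde g(S) = g(\overline S)$. Since $B \supset A \iff \overline B \subset \overline A$ and $|B\setminus A| = |\overline A \setminus \overline B|$, the change of variables $A = \overline{S}$, $B = \overline{T}$ turns the relation $f(A) = \sum_{B\supset A}(-1)^{|B\setminus A|}g(B)$ (for all $A\subset V$) into $\tilde f(S) = \sum_{T\subset S}(-1)^{|S\setminus T|}\tilde g(T)$ (for all $S\subset V$), and likewise turns $g(A) = \sum_{B\supset A} f(B)$ into $\tilde g(S) = \sum_{T\subset S}\tilde f(T)$. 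Hence the second equivalence for $(f,g)$ is precisely the first equivalence applied to $(\tilde f,\tilde g)$, already proved. One could equally well repeat the two displayed computations above with all inclusions reversed, $D = B\setminus A$ taking the role of $D = A\setminus B$.

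There is no genuine difficulty here; the only point requiring care is the interchange of the double summations and the correct identification of the resulting inner sum with a Boolean-interval sum — in particular keeping track of whether $|B\setminus C|$ or $|A\setminus B|$ appears, and invoking the cancellation $(1-1)^{|A\setminus C|} = \Ind_{\{C=A\}}$ with the right exponent.
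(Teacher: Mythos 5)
Your proof is correct and, for the second equivalence, takes exactly the paper's route: reducing it to the first via the complementation $\widetilde{f}(A)=f(\overline{A})$, $\widetilde{g}(A)=g(\overline{A})$. The only difference is that you also supply a self-contained proof of the first equivalence (the standard interchange of summations plus the identity $\sum_{C\subset B\subset A}(-1)^{|A\setminus B|}=\Ind_{\{C=A\}}$), whereas the paper delegates that part to a citation; your argument there is the standard one and is correct.
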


\begin{proof}
The first equivalence is proved e.g. in~\cite{Mumford_Desolneux_pattern_theory_2010}.
The second equivalence corresponds to the first applied to $\widetilde{f}(A) = f(\overline{A})$ and $\widetilde{g}(A) = g(\overline{A})$.
You will find more details on this matter in the book of Rota~\cite{Rota_Foundations_combinatorial_theoryI_1964}.\end{proof}

\section{Cholesky Decomposition Update}
\label{sec:cholesky_update}

To be efficient, the sequential algorithm relies on Cholesky decompositions that are updated step by step to save computations.
Let $M$ be a symmetric semi-definite matrix of the form $\displaystyle
M = 
\begin{pmatrix}
A & B
\\
B^t & C
\end{pmatrix}
$
where $A$ and $C$ are square matrices.
We suppose that the Cholesky decomposition $T_A$ of the matrix $A$ has already been computed and we want to compute the Cholesky decomposition $T_M$ of $M$.
Then, set 
$$
V = T_A^{-1} B \quad \text{ and } \quad
X = C - V^t V = C - B^t A^{-1} B
$$
the Schur complement of the block $A$ of the matrix $M$. Denote by $T_X$ the Cholesky decomposition of $X$.
Then, the Cholesky decomposition of $M$ is given by
$$
T_M = 
\begin{pmatrix}
T_A & 0
\\
V^t & T_X
\end{pmatrix}.
$$
Indeed,
$$
% \begin{aligned}
T_M T_M^t
= 
% \begin{pmatrix}
% T_A & 0
% \\
% V^t & T_X
% \end{pmatrix}
% \begin{pmatrix}
% T_A & 0
% \\
% V^t & T_X
% \end{pmatrix}^t
% \\
% & =
\begin{pmatrix}
T_A & 0
\\
V^t & T_X
\end{pmatrix}
\begin{pmatrix}
T_A^t & V
\\
0 & T_X^t
\end{pmatrix}
 =
\begin{pmatrix}
T_A T_A^t & T_A V 
\\
V^t T_A^t & V^t V + T_X T_X^t
\end{pmatrix}
=
\begin{pmatrix}
A & B
\\
B^t & C
\end{pmatrix}.
% \end{aligned}
$$

\ack
Accepted for publication by the Applied Probability Trust (http://www.appliedprobability.org) in the Journal of Applied Probability JAP 57.4 (December 2020). This work was supported by grants from R\'egion Ile-de-France. We thank the reviewers for their valuable comments and suggestions that helped us to improve the paper.

\bibliographystyle{APT}      
\bibliography{biblio_dpp}

\end{document}